\documentclass[11pt]{article}

\usepackage[utf8]{inputenc}
\usepackage[T1]{fontenc}
\usepackage{lmodern}
\usepackage[protrusion=true,expansion=false]{microtype}
\usepackage[margin=1in, top=1.1in, bottom=1.1in]{geometry}

\usepackage{amsmath,amssymb,amsthm}
\usepackage{booktabs}
\usepackage{graphicx}
\usepackage[colorlinks=true,
            linkcolor=blue!70!black,
            citecolor=blue!70!black,
            urlcolor=blue!70!black]{hyperref}
\usepackage{xcolor}
\usepackage{enumitem}
\usepackage[numbers,sort&compress]{natbib}
\usepackage{caption}
\usepackage{tikz}
\usetikzlibrary{shapes.geometric,arrows.meta,positioning,fit,backgrounds,patterns}

\usepackage{titlesec}
\titleformat{\section}{\large\bfseries}{\thesection}{1em}{}
\titleformat{\subsection}{\normalsize\bfseries}{\thesubsection}{1em}{}

\newtheorem{theorem}{Theorem}[section]
\newtheorem{lemma}[theorem]{Lemma}
\newtheorem{corollary}[theorem]{Corollary}

\newtheorem{definition}{Definition}[section]
\newtheorem{assumption}{Assumption}[section]
\newtheorem{example}{Example}[section]
\newtheorem{remark}{Remark}[section]

\newcommand{\Sigstar}{\Sigma^{*}}
\newcommand{\Azero}{\mathcal{A}_0}
\newcommand{\Dhat}{\widehat{D}}

\newcommand{\sigg}{\sigma(g)}

\newcommand{\Ind}[1]{\mathbf{1}_{[#1]}}
\newcommand{\JS}{\mathrm{JS}}
\newcommand{\E}{\mathbb{E}}
\newcommand{\Prob}{\mathbb{P}}
\newcommand{\Compliance}{\mathrm{Compliance}}

\title{\textbf{From Admission to Invariants:}\\
       \large Measuring Deviation in Delegated Agent Systems}

\author{Marcelo Fernandez \\ \small TraslaIA \\ \small \texttt{info@traslaia.com}}
\date{}

\begin{document}
\maketitle

\begin{abstract}
Autonomous agent systems are governed by enforcement mechanisms that flag
hard constraint violations at runtime.  The Agent Control Protocol
\citep{fernandez2026acp} identifies a structural limit of such systems
(\S17): a correctly-functioning enforcement engine can enter a regime in
which behavioral drift is invisible to it, because the enforcement signal
operates below the layer where deviation is measurable.
We show that enforcement-based governance is \emph{structurally} unable to
determine whether an agent's behavior
remains within the admissible behavior space $\Azero$ established at admission
time.  Our central result, the \textbf{Non-Identifiability Theorem}, proves
that $\Azero \notin \sigg$: the $\sigma$-algebra generated by the enforcement
signal~$g$ does not contain~$\Azero$ under the \emph{Local Observability
Assumption}, which every practical enforcement system satisfies.
The impossibility arises from a fundamental mismatch: $g$ evaluates actions
locally against a point-wise rule set, while $\Azero$ encodes global,
trajectory-level behavioral properties set at admission time.
We then define the \textbf{Invariant Measurement Layer} (IML), which bypasses
this limitation by retaining direct access to the generative model of
$\Azero$.  We prove an information-theoretic impossibility for enforcement-based
monitoring; separately, we show that IML detects admission-time drift with
provably finite detection delay, operating in the region where enforcement is
structurally blind.
We validate the theoretical claims across four experimental settings:
three controlled drift scenarios (300 and 1000 steps), a live \textbf{n8n}
webhook pipeline, and a \textbf{LangGraph} \texttt{StateGraph} agent with
deterministic tool selection---in every case, enforcement triggers zero
violations while IML deviation grows monotonically, detecting each drift
type within 9--258 steps of drift onset.
This paper is Paper~2 of a 6-paper Agent Governance Series:
atomic decision boundaries (P0,~\cite{fernandez2026a}),
stateful enforcement---ACP (P1,~\cite{fernandez2026acp}),
fair multi-agent allocation (P3,~\cite{fairgov26}),
composition irreducibility (P4,~\cite{fernandez2026comp}),
and runtime execution validity---RAM (P5,~\cite{fernandez2026ram}).
\end{abstract}

\tableofcontents
\newpage

\section{Introduction}
\label{sec:intro}

Multi-agent systems delegate subtasks to agents that operate with partial
autonomy under a shared policy framework.  At \emph{admission time}, when an
agent is authorized and its operational scope defined, the system implicitly
records the constraints, initial context, and delegation lineage that
constitute the agent's \emph{admissible behavior space}~$\Azero$.  The
dominant governance model then attempts to maintain alignment through an
\emph{enforcement signal}~$g : \Sigstar \to \{0,1\}$, which returns~$1$
whenever a trace~$\tau$ contains a hard constraint violation.

This architecture has a fundamental observability limitation that has received
little formal attention.  Enforcement signals are \emph{local}: they evaluate
each action against a rule set that is static, does not change with trajectory
history, and does not reference~$\Azero$ directly.  An agent can therefore
\emph{drift}---systematically shifting its behavioral distribution away from
admission-time expectations---while every individual action remains within the
permitted action space.  This phenomenon is not a corner case; it is the
generic behavior of any gradual distributional shift, goal reinterpretation,
or delegation-depth creep that stays below hard constraint thresholds.

\paragraph{Motivation.}
A foundational result in agent governance \citep{fernandez2026a} establishes
that only \emph{atomic decision systems}---where evaluation and execution
occur at the same state transition---can guarantee admissibility at runtime.  Split systems that separate evaluation from
execution cannot close this gap regardless of policy sophistication.
\citet{fernandez2026acp} instantiates this principle as the Agent Control
Protocol (ACP): an atomic admission-control mechanism that requires
execution-trace state.  Section~17 of that work identifies a structurally
distinct failure mode that persists \emph{even within} an atomic enforcement
layer: a correctly-functioning, stateful enforcement system can enter a regime
in which behavioral drift is invisible to it.
ACP calls this \emph{deviation collapse} (\S17.4)---enforcement is active and
syntactically correct, yet the behavioral boundary is never exercised because
upstream conditions suppress all activating inputs.

This paper operates above the atomic enforcement boundary and formalizes
the conditions under which behavioral drift escapes detection.
We prove (Theorem~\ref{thm:nonid}) that the admission-time behavioral
contract $\Azero$ is \emph{not identifiable} from the enforcement signal
$g$ alone---no matter how sophisticated the risk-scoring function.
We then introduce the \textbf{Invariant Measurement Layer} (IML), an
estimator that restores observability at the behavioral level, operating
precisely in the region where $g(\tau) = 0$ always and enforcement-based
monitoring is structurally blind.

\paragraph{Main contributions.}
\begin{enumerate}[leftmargin=1.5em, itemsep=1pt]
  \item \textbf{T1 (Existence).}  We prove that the compliance-invariance
        gap is non-empty: trajectories satisfying $g(\tau)=0$ yet
        $\tau \notin \Azero$ exist under mild conditions
        (Lemma~\ref{lem:existence}).

  \item \textbf{T2 (Non-Identifiability).}  We prove $\Azero \notin \sigg$
        under the Local Observability Assumption: no measurable function of
        the enforcement signal can reconstruct $\Azero$-membership
        (Theorem~\ref{thm:nonid}).  We exhibit an explicit constructive
        witness grounded in experimental data (Example~\ref{ex:t2}).

  \item \textbf{T3 (IML Recoverability).}  We define IML and prove it is a
        consistent estimator of deviation $D(\tau,\Azero)$ with provably
        finite detection delay (Theorem~\ref{thm:iml}).

  \item \textbf{Empirical validation.}  Four experimental settings confirm
        $g(\tau_t)=0\;\forall t$ while $\Dhat_t$ grows monotonically:
        three controlled drift scenarios (300 and 1000 steps), a live
        \textbf{n8n} webhook pipeline (\S\ref{sec:real_traces}), and a
        real \textbf{LangGraph} \texttt{StateGraph} agent
        (\S\ref{sec:langgraph})---directly instantiating both T2 and T3.
\end{enumerate}

\paragraph{Paper organization.}
Section~\ref{sec:setup} introduces the formal model.
Section~\ref{sec:theory} states and proves T1, T2, and T3.
Section~\ref{sec:iml} details the IML estimator.
Section~\ref{sec:experiments} presents the empirical validation.
Section~\ref{sec:related} discusses related work, and
Section~\ref{sec:conclusion} concludes.

\section{Problem Setup}
\label{sec:setup}

\subsection{Trace language and admissible behavior}

Let $\Sigma$ be a finite alphabet of agent actions (tool calls, delegation
requests, context updates).  A \emph{trace} is a finite sequence
$\tau = (b_1, b_2, \ldots, b_T) \in \Sigstar$.

\begin{definition}[Admissible Behavior Space]
\label{def:A0}
The \emph{admissible behavior space} is $\Azero = f(C, E_0, L) \subseteq \Sigstar$,
where $C$ is the constraint set at admission, $E_0$ is the initial context, and
$L$ is the delegation lineage.
\end{definition}

$\Azero$ is a \emph{global} object: whether $\tau \in \Azero$ depends on
full-trajectory properties (distribution over actions, depth profile, context
coherence), not on any single action in isolation.

\subsection{Enforcement signal and local observability}

\begin{definition}[Enforcement Signal]
\label{def:enforcement}
An \emph{enforcement signal} $g : \Sigstar \to \{0,1\}$ returns $1$ to
indicate a policy violation.
\end{definition}

\begin{assumption}[Local Observability]
\label{asm:local}
The enforcement signal decomposes as $g(\tau) = h(V(\tau))$,
where $V : \Sigstar \to \mathcal{V}$ evaluates only
\emph{point-wise constraint violations}
(e.g., presence of a forbidden tool, delegation depth exceeding a fixed limit)
and $h : \mathcal{V} \to \{0,1\}$ aggregates violations.
Crucially, $V(\tau)$ is independent of~$\Azero$ and depends only on
individual actions, not on global trajectory properties.
\end{assumption}

Assumption~\ref{asm:local} is satisfied by every practical enforcement
mechanism we are aware of: permission-check middleware, guardrail classifiers,
schema validators, and policy engines all evaluate individual actions against
a static rule set.

\begin{remark}[ACP as a natural instance of Definition~\ref{def:enforcement}]
\label{rem:acp-instance}
The Agent Control Protocol \citep{fernandez2026acp} provides a canonical
instance of Definition~\ref{def:enforcement} that satisfies
Assumption~\ref{asm:local}.  For a trace $\tau = (b_1,\ldots,b_T)$ of
agent requests, define the ACP projection:
\[
  g_{\mathrm{ACP}}(\tau)
  \;:=\;
  \mathbf{1}\!\left[\,\exists\,i \leq T :
    \mathrm{decision}(b_i) \in \{\texttt{Denied},\,\texttt{Escalated}\}
  \right].
\]
Each $\mathrm{decision}(b_i)$ is computed by ACP's
\texttt{evaluate-then-mutate} pipeline against a static
\texttt{PatternKey} rule set, independently of $\Azero$; it therefore
satisfies the decomposition $g = h \circ V$ required by
Assumption~\ref{asm:local}.
We use $g_{\mathrm{ACP}}$ as the running example throughout this paper.
Importantly, the Non-Identifiability Theorem (Theorem~\ref{thm:nonid}) is
proved for the \emph{entire class} of $g$ satisfying
Assumption~\ref{asm:local}, not for $g_{\mathrm{ACP}}$ alone; $g_{\mathrm{ACP}}$
is a natural \emph{instance}, not the sole scope of the result.
\end{remark}

\subsection{Observability structure}

The enforcement signal $g$ partitions $\Sigstar$ into two equivalence classes:
$g^{-1}(0)$ (compliant traces) and $g^{-1}(1)$ (violating traces).  The
corresponding $\sigma$-algebra is
\[
  \sigg \;=\; \bigl\{\,\emptyset,\; g^{-1}(0),\; g^{-1}(1),\; \Sigstar\,\bigr\}.
\]
A set $S \subseteq \Sigstar$ is \emph{identifiable from~$g$} if and only if
$S \in \sigg$.

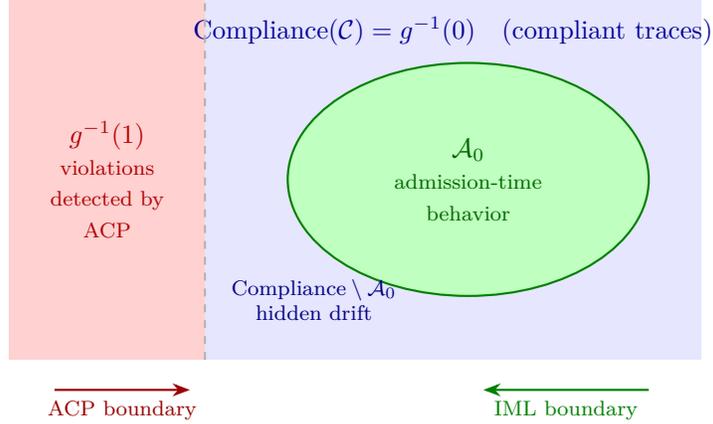
\begin{figure}[t]
\centering
\begin{tikzpicture}[
  every node/.style={font=\small},
  >=Stealth
]
\fill[gray!12] (-4.6,-2.4) rectangle (4.6,2.4);
\node[anchor=north west, gray!70!black] at (-4.55,2.35)
  {$\Sigstar$ (all traces)};

\fill[red!18] (-4.6,-2.4) rectangle (-2.0,2.4);
\node[text=red!70!black, align=center] at (-3.3,0)
  {$g^{-1}(1)$\\{\scriptsize violations}\\{\scriptsize detected by}\\{\scriptsize ACP}};

\fill[blue!10] (-2.0,-2.4) rectangle (4.6,2.4);
\node[anchor=north, blue!60!black] at (1.3,2.3)
  {$\Compliance(\mathcal{C}) = g^{-1}(0)$ \ \ (compliant traces)};

\fill[green!25!white] (1.5,0) ellipse (2.4cm and 1.55cm);
\draw[green!50!black, thick] (1.5,0) ellipse (2.4cm and 1.55cm);
\node[green!40!black, align=center] at (1.5,0)
  {$\Azero$\\{\scriptsize admission-time}\\{\scriptsize behavior}};

\node[blue!50!black, align=center, font=\scriptsize] at (-0.55,-1.6)
  {$\Compliance \setminus \Azero$\\hidden drift};

\draw[dashed, gray!60, thick] (-2.0,-2.4) -- (-2.0,2.4);

\draw[->, thick, red!60!black] (-4.0,-2.8) -- (-2.2,-2.8)
  node[midway, below, font=\scriptsize] {ACP boundary};
\draw[->, thick, green!50!black] (3.9,-2.8) -- (1.7,-2.8)
  node[midway, below, font=\scriptsize] {IML boundary};

\end{tikzpicture}
\caption{Layered structure of the trace space $\Sigstar$.
  ACP's enforcement signal $g$ partitions $\Sigstar$ into
  $g^{-1}(1)$ (constraint violations, detected by ACP) and
  $\Compliance(\mathcal{C}) = g^{-1}(0)$ (compliant traces, invisible to
  enforcement).  Within $\Compliance(\mathcal{C})$, IML monitors the
  boundary between $\Azero$ (admission-time behavior) and
  $\Compliance \setminus \Azero$ (compliant but drifted behavior---
  \emph{hidden drift}).  The two mechanisms cover complementary boundaries;
  neither is redundant.}
\label{fig:layered}
\end{figure}

\subsection{Ground-truth deviation}

\begin{definition}[Deviation Function]
\label{def:deviation}
Let $d : \Sigstar \times \Sigstar \to \mathbb{R}_{\geq 0}$ be a trajectory
metric.  The \emph{deviation} of~$\tau$ from~$\Azero$ is
$D(\tau, \Azero) = \inf_{x \in \Azero} d(\tau, x)$.
\end{definition}

\begin{remark}[Series notation]
\label{rem:notation-D}
Throughout this paper, $D(\cdot,\cdot)$ denotes a \emph{deviation function}
(real-valued distance from a behavioral set).  In Paper~0
\citep{fernandez2026a}, the symbol $D$ denotes the \emph{decision domain}
$D = \{\textsc{Allow}, \textsc{Refuse}, \textsc{Escalate}\}$---a finite set
of admission outcomes.  The two uses are unrelated in type (function vs.\
set); when results from both papers appear together, we write
$D(\tau,\Azero)$ for the deviation function and $\mathcal{D}$ or
$\{\textsc{Allow},\ldots\}$ for the decision domain to avoid ambiguity.
\end{remark}

$D(\tau,\Azero)$ is the ground truth but is intractable to compute directly
because $\Azero$ is implicitly defined.  Section~\ref{sec:iml} constructs IML
as a computable, consistent approximation.

\section{Theoretical Results}
\label{sec:theory}

\subsection{T1: Existence of the compliance-invariance gap}

\begin{lemma}[Existence]
\label{lem:existence}
Under Assumption~\ref{asm:local}, if $\Azero \neq \emptyset$ and
$\Azero \subsetneq \Sigstar$, then there exists $\tau \in \Sigstar$ such
that $V(\tau) = \emptyset$, $g(\tau) = 0$, and $\tau \notin \Azero$.
\end{lemma}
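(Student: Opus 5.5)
The plan is to build the witness $\tau$ out of an admissible trace, by rearranging and repeating its actions. By Assumption~\ref{asm:local}, doing so cannot change what $V$ sees, while $\Azero$, being a global object, can tell the difference. Write $P=\{b\in\Sigma : V(b)=\emptyset\}$ for the set of individually permitted actions. Because $V(\tau)$ depends only on the individual actions of $\tau$ and not on their order, multiplicity or any trajectory-level statistic, every trace $\tau\in P^{*}$ satisfies $V(\tau)=\emptyset$. Since $h$ aggregates violations and returns $1$ exactly when some violation is present, each such trace also has $g(\tau)=h(\emptyset)=0$. The lemma therefore reduces to exhibiting some $\tau\in P^{*}$ with $\tau\notin\Azero$.

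The first step uses $\Azero\neq\emptyset$ to fix a trace $x=(b_1,\dots,b_T)\in\Azero$. I will use the reading of Definition~\ref{def:A0} implicit in Figure~\ref{fig:layered}: the admission-time contract $C$ is part of $f(C,E_0,L)$, so admitted behavior is compliant and $\Azero\subseteq g^{-1}(0)$. In particular every $b_i$ lies in $P$, and $P\neq\emptyset$. The second step considers the family $\mathcal{F}_x\subseteq P^{*}$ of all traces built only from actions occurring in $x$: permutations, arbitrary repetitions and truncations. By the first paragraph, every member of $\mathcal{F}_x$ has $V=\emptyset$ and $g=0$. The third step shows that $\mathcal{F}_x\not\subseteq\Azero$, using the two remaining facts: $\Azero\subsetneq\Sigstar$, and $\Azero$ is global. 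Concretely, I would take $\tau_n = x\,b^{\,n}$ for a fixed $b\in\{b_1,\dots,b_T\}$ and let $n$ grow. The empirical action distribution of $\tau_n$ then converges to a point mass on $b$. This is precisely the kind of distributional or depth-profile shift that the paper says $\Azero$ encodes, and $\tau_n$ leaves $\Azero$ once the shift exceeds the admission-time tolerance.

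The main obstacle is this third step. Nothing in the bare hypothesis $\Azero\subsetneq\Sigstar$ forces the excluded traces to lie inside $P^{*}$. If $\Azero$ were exactly $P^{*}$, the lemma would fail, because then $\Azero$ would be a purely local, $V$-saturated set. I would therefore make explicit the paper's standing characterization of $\Azero$ as global: its membership is not a function of $V$, so it is not a union of fibres of $V$. Under that characterization, the set $\Azero$ and the nonempty fibre $V^{-1}(\emptyset)\ni x$ stand in one of two relations: either the fibre is contained in $\Azero$, or it meets the complement of $\Azero$. Containment would combine with $\Azero\subseteq g^{-1}(0)$ to force $\Azero=V^{-1}(\emptyset)$, which is $V$-measurable, a contradiction. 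Hence some $\tau$ with $V(\tau)=\emptyset$ lies outside $\Azero$, and this $\tau$ is the witness. The concrete family $\tau_n=x\,b^{\,n}$ illustrates which trace realizes it in practice. The role of $\Azero\subsetneq\Sigstar$ is to rule out the degenerate case in which $\Azero$ is everything, so that a nontrivial boundary exists.
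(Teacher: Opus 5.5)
Your argument is correct relative to the hypotheses you state, and it takes a different route from the paper.

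The paper starts outside $\Azero$. It takes $\tau^*\notin\Azero$, replaces every point-wise violating action by a $V$-safe action drawn from the marginal of $\Azero$, and then asserts that the trajectory-level defect which excluded $\tau^*$ survives the substitution. You start inside, from $x\in\Azero\subseteq g^{-1}(0)=V^{-1}(\emptyset)$, and split on the fibre $V^{-1}(\emptyset)$. If that fibre were contained in $\Azero$, then $\Azero=V^{-1}(\emptyset)$, contradicting globality; so the fibre meets the complement of $\Azero$.

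What your route buys is that it never needs to know how membership in $\Azero$ reacts to editing a trace. That is exactly the unsupported step in the paper's proof. Substituting safe actions drawn from $\Azero$'s marginal tends to raise the safe-tool fraction, which can cure the distributional defect. And if $\tau^*$ was excluded only because it contained a forbidden action, the repaired $\tau'$ may well lie in $\Azero$.

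Your counterexample $\Azero=V^{-1}(\emptyset)$ also shows correctly that the lemma as literally stated is false, so some extra hypothesis is unavoidable. The price is that your hypotheses carry all of the content. Given $\emptyset\neq\Azero\subseteq V^{-1}(\emptyset)$, the condition ``$\Azero$ is not a union of $V$-fibres'' is equivalent to $\Azero\neq V^{-1}(\emptyset)$, which is the conclusion itself. Since $\sigg\subseteq\sigma(V)$, it already yields Theorem~\ref{thm:nonid} without the lemma.

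So present it openly as an assumption. It is one precise reading of ``$V(\tau)$ is independent of $\Azero$'' together with the globality remark in Section~\ref{sec:setup}. State alongside it your use of $h(v)=0\iff v=\emptyset$, which is what identifies $g^{-1}(0)$ with $V^{-1}(\emptyset)$ and gives $x\in V^{-1}(\emptyset)$. Finally, $\mathcal{F}_x$ and the family $\tau_n=x\,b^{\,n}$ play no role in the final argument, and the claim that $\tau_n$ eventually leaves $\Azero$ is unproved, so keep them as illustration only.
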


\begin{proof}
Since $\Azero \subsetneq \Sigstar$, there exists $\tau^* \notin \Azero$.
If $V(\tau^*) = \emptyset$ already, we are done.  Otherwise, $\tau^*$ contains
a step $b_i$ that triggers $V \neq \emptyset$.  By Assumption~\ref{asm:local},
$V$ depends only on point-wise properties of individual actions.  Let $\tau'$
be obtained from $\tau^*$ by replacing every $b_i$ with $V(b_i)\neq\emptyset$
with an action $b'_i$ drawn from the marginal distribution of $\Azero$
restricted to safe actions (non-empty by $\Azero \neq \emptyset$).  This
substitution satisfies $V(\tau') = \emptyset$ by construction.

It remains to show $\tau' \notin \Azero$.  Recall that $\tau^* \notin \Azero$
because it violates a trajectory-level constraint encoded in $f(C, E_0, L)$---
for instance, the fraction of safe tools in $\tau^*$ falls below the threshold
specified in $C$, or the mean delegation depth exceeds the lineage bound in $L$.
The substitution replaces only actions with $V(b_i) \neq \emptyset$ by safe
actions, which either leaves the violating distributional property intact
(if $\tau^*$ has too many boundary tools) or reduces the safe-tool fraction
further (if the original violation was that boundary tools were over-represented).
In either case, the distributional constraint that caused $\tau^* \notin \Azero$
is still violated by $\tau'$, so $\tau' \notin \Azero$, as required.

In our empirical instantiation (Section~\ref{sec:experiments}), this
construction is realized concretely: all 900 post-drift steps across three
scenarios satisfy $V(\tau_t) = \emptyset$ while $D(\tau_t, \Azero) > 0$.
\end{proof}

\subsection{T2: Non-identifiability of $\Azero$ under enforcement}

\begin{theorem}[Non-Identifiability]
\label{thm:nonid}
Under Assumption~\ref{asm:local}, if $\Azero \neq \emptyset$ and
$\Azero \subsetneq \Sigstar$, then
\[
  \Azero \notin \sigg.
\]
Equivalently, there exists no measurable function
$f : \{0,1\} \to \{0,1\}$ such that
$f(g(\tau)) = \Ind{\tau \in \Azero}$ for all $\tau \in \Sigstar$.
\end{theorem}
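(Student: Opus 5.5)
The plan is to exploit the fact that $\sigg$ has only four elements, so $\Azero \in \sigg$ would force $\Azero$ to be one of $\emptyset$, $g^{-1}(0)$, $g^{-1}(1)$, or $\Sigstar$. The hypotheses $\Azero \neq \emptyset$ and $\Azero \subsetneq \Sigstar$ immediately exclude the first and last options. What remains is to rule out $\Azero = g^{-1}(0)$ and $\Azero = g^{-1}(1)$. The equivalent formulation follows directly: any $f:\{0,1\}\to\{0,1\}$ gives $\{\tau : f(g(\tau))=1\} \in \{\emptyset, g^{-1}(0), g^{-1}(1), \Sigstar\}$. So the two statements coincide, and I will prove the set-theoretic version.

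For $\Azero = g^{-1}(0)$, I will invoke Lemma~\ref{lem:existence}. Its hypotheses are exactly those of the theorem, and it produces a trace $\tau$ with $g(\tau)=0$ and $\tau\notin\Azero$. Hence $g^{-1}(0) \not\subseteq \Azero$, so $\Azero \neq g^{-1}(0)$.

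For $\Azero = g^{-1}(1)$, I want a trace in $\Azero$ that $g$ flags, or a compliant trace inside $\Azero$. The cleanest route is to show $\Azero \cap g^{-1}(0) \neq \emptyset$, which would give $\Azero \not\subseteq g^{-1}(1)$. Here I would lean on the modelling intent in Assumption~\ref{asm:local} and Definition~\ref{def:A0}. $\Azero$ is the behaviour space certified at admission, and the proof of Lemma~\ref{lem:existence} already uses that $\Azero$ has a non-empty marginal on safe actions. So there is an $x\in\Azero$ built from actions with $V(b)=\emptyset$, and for it $g(x)=h(\emptyset)=0$. Alternatively, if one only knows $\Azero\neq\emptyset$, take any $x\in\Azero$ and apply the same substitution as in the lemma to its flagged actions. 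I would then argue that substituting safe admissible actions does not leave $\Azero$.

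I expect this last case to be the main obstacle. Nothing in the bare statement prevents a pathological $\Azero$ that coincides with the violation set. Some implicit premise is needed: either $h(\emptyset)=0$ together with $\Azero$ containing at least one violation-free trace, or a symmetric counterpart of the lemma. I would state this explicitly as the consequence of $\Azero$'s safe-action marginal being non-empty, which the proof of Lemma~\ref{lem:existence} already assumes. With both cases closed, $\Azero$ matches none of the four atoms of $\sigg$, which gives $\Azero\notin\sigg$.
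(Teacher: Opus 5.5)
Your four-atom reduction is the paper's argument in different clothing. The paper exhibits $\tau_1\in\Azero$ and $\tau_2\notin\Azero$ with $g(\tau_1)=g(\tau_2)=0$. Here $\tau_2$, supplied by Lemma~\ref{lem:existence}, rules out $\Azero=g^{-1}(0)$, and $\tau_1$ rules out $\Azero=g^{-1}(1)$.

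The obstacle you flag in the second case is a genuine gap, and neither of your routes closes it. Your substitution route (replace the flagged actions of some $x\in\Azero$ and argue the result stays in $\Azero$) is exactly the paper's move: ``applying the same substitution argument \ldots\ yields $\tau_1'\in\Azero$''. But the lemma's substitution argument only shows that the trajectory-level violation survives substitution. In other words, substitution cannot \emph{create} membership. It says nothing about \emph{preserving} membership, and replacing flagged actions can destroy whatever global property placed $x$ in $\Azero$. Your other route, inferring an all-safe trace in $\Azero$ from a non-empty safe-action marginal, is a non sequitur. A non-empty marginal only means some trace of $\Azero$ contains some action with $V(b)=\emptyset$, not that some trace of $\Azero$ consists only of such actions. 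Moreover, the lemma's claim that this marginal is non-empty ``by $\Azero\neq\emptyset$'' is itself unsupported.

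The gap cannot be closed from the stated hypotheses, because the statement is false without an extra premise. Take $\Sigma=\{a,b\}$ and let $V(\tau)$ be the set of positions at which $b$ occurs, a point-wise check independent of $\Azero$. Let $h(v)=1$ iff $v\neq\emptyset$, and set $\Azero:=g^{-1}(1)$, the traces containing $b$. Then $\Azero$ is non-empty and proper, and the conclusion of Lemma~\ref{lem:existence} holds (e.g.\ $\tau=(a)$), yet $\Azero\in\sigg$.

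So the premise you identified, $\Azero\cap g^{-1}(0)\neq\emptyset$ (e.g.\ $h(\emptyset)=0$ and $\Azero$ contains a violation-free trace), has to be \emph{added as a hypothesis}, not derived. With it, your argument is complete and coincides with the paper's.

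Your first case is also only as strong as Lemma~\ref{lem:existence}. With $\Azero:=g^{-1}(0)$ in the same example, the lemma itself fails. Its proof imports the unstated assumption that non-membership in $\Azero$ is always caused by a trajectory-level property that survives substitution.
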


\begin{proof}
By Lemma~\ref{lem:existence}, there exists $\tau_2 \notin \Azero$ with
$g(\tau_2) = 0$.  Since $\Azero \neq \emptyset$, there exists
$\tau_1 \in \Azero$.  If $g(\tau_1) = 0$, we already have
$g(\tau_1) = g(\tau_2) = 0$ with $\tau_1 \in \Azero$ and $\tau_2 \notin \Azero$.
If $g(\tau_1) = 1$, then by the Local Observability Assumption~\ref{asm:local},
$V(\tau_1) \neq \emptyset$; applying the same substitution argument as in
Lemma~\ref{lem:existence} yields $\tau_1' \in \Azero$ with
$V(\tau_1') = \emptyset$, hence $g(\tau_1') = 0$.

In either case, we have $\tau_1 \in \Azero$ and $\tau_2 \notin \Azero$ with
$g(\tau_1) = g(\tau_2) = 0$.  Suppose for contradiction that
$\Azero \in \sigg$.  Since $\sigg$ is generated by $g^{-1}(0)$ and
$g^{-1}(1)$, and $\Azero \subseteq g^{-1}(0)$, any $\sigg$-measurable
function $f$ satisfies $f(0) = c$ for some constant $c \in \{0,1\}$.  But
then $f(g(\tau_1)) = f(0) = c = f(g(\tau_2))$, so
$\Ind{\tau_1 \in \Azero} = \Ind{\tau_2 \in \Azero}$.
This contradicts $\tau_1 \in \Azero$ and $\tau_2 \notin \Azero$.  \qed
\end{proof}

\begin{corollary}[Information-Theoretic Bound]
\label{cor:info}
$I(\Azero \,;\, g(\tau)) < H(\Azero)$.
The mutual information between the enforcement signal and $\Azero$-membership
is strictly less than the entropy of $\Azero$-membership.
\end{corollary}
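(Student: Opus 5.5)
The plan is to make the corollary precise by placing a probability distribution on traces, and then to reduce it to the standard characterization of when mutual information equals entropy. Let $\tau$ be drawn from a distribution $P$ on $\Sigstar$. Write $A := \Ind{\tau \in \Azero}$ for the membership indicator and $G := g(\tau)$. By the chain rule,
\[
  I(A;G) = H(A) - H(A \mid G).
\]
So the strict inequality $I(A;G) < H(A)$ is equivalent to $H(A \mid G) > 0$. In turn, $H(A\mid G)=0$ holds if and only if $A$ is almost surely a function of $G$, i.e.\ $A = f(G)$ $P$-a.s.\ for some $f:\{0,1\}\to\{0,1\}$. The whole task is therefore to rule out such an $f$, and Theorem~\ref{thm:nonid} already does this pointwise.

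The key steps are as follows.

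First, I will invoke the witnesses built in the proof of Theorem~\ref{thm:nonid}. These are $\tau_1 \in \Azero$ and $\tau_2 \notin \Azero$ with $g(\tau_1)=g(\tau_2)=0$; the existence of $\tau_2$ comes from Lemma~\ref{lem:existence}.

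Second, I will condition on the event $\{G=0\}$. If both witnesses carry positive probability, then conditional on $G=0$ the variable $A$ takes both values with positive probability. Hence
\[
  H(A\mid G) \;\ge\; P(G=0)\, H(A \mid G=0) \;>\; 0 .
\]

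Third, I will conclude $I(A;G) = H(A) - H(A\mid G) < H(A)$.

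The main obstacle is that the corollary is stated without a measure. The pointwise non-identifiability of Theorem~\ref{thm:nonid} only yields a strict inequality if $P$ charges both the compliant-admissible region $\Azero\cap g^{-1}(0)$ and the hidden-drift region $g^{-1}(0)\setminus\Azero$. If $P$ is concentrated on, say, $\Azero$ alone, then $H(A)=0$ and the strict inequality fails. I will therefore state the needed hypothesis explicitly: $P$ has full support on $\Sigstar$, or at least $P(\{\tau_1\})>0$ and $P(\{\tau_2\})>0$. Since $\Sigstar$ is countable, any full-support distribution works, for example a geometric length distribution with uniform letters. Under this hypothesis the conditional entropy bound in the second step is immediate. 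The empirical setting, in which both pre-drift and post-drift traces occur with $g=0$, instantiates it.
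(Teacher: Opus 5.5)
Your proof is correct. It has the same skeleton as the paper's: equality $I = H$ would make $\Azero$-membership a function of $g(\tau)$, which Theorem~\ref{thm:nonid} forbids. What you add is the probability model the paper leaves implicit.

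The paper asserts that $I(\Azero;g(\tau)) = H(\Azero)$ holds if and only if $f(g(\tau)) = \Ind{\tau\in\Azero}$ for \emph{all} $\tau\in\Sigstar$. In general, equality only forces this $P$-almost surely, and Theorem~\ref{thm:nonid} is a pointwise statement that says nothing about null sets. Your example of a $P$ concentrated on $\Azero$ (giving $I = H = 0$) shows the corollary fails for some choices of $P$. So your hypothesis (full support on the countable set $\Sigstar$, or positive mass on a witness pair) is exactly what the paper's one-line argument silently assumes. Your direct bound $H(A\mid G)\ge P(G=0)\,H(A\mid G=0)>0$ also replaces the contradiction with an explicit positive gap. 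The paper's version buys brevity; yours is the one that holds as a statement about a random trace.

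One small improvement: under full support you need only the \emph{statement} of Theorem~\ref{thm:nonid}, not the specific witnesses from its \emph{proof}. The statement is equivalent to the existence of some $\tau_1\in\Azero$ and $\tau_2\notin\Azero$ with $g(\tau_1)=g(\tau_2)$, and conditioning on that common value of $G$ gives the same strict bound. This is safer, because the proof's case $g(\tau_1)=1$ reuses the substitution argument of Lemma~\ref{lem:existence}, which was only argued to preserve \emph{non}-membership in $\Azero$.
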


\begin{proof}
Suppose for contradiction that $I(\Azero;\,g(\tau)) = H(\Azero)$.
This equality holds if and only if $\Azero$-membership is a deterministic
function of $g(\tau)$---i.e., there exists $f:\{0,1\}\to\{0,1\}$ with
$f(g(\tau)) = \mathbf{1}_{[\tau\in\Azero]}$ for all $\tau$.
But Theorem~\ref{thm:nonid} proves exactly that no such $f$ exists.
Hence $I(\Azero;\,g(\tau)) < H(\Azero)$.
\end{proof}

\begin{remark}[Why T2 is non-trivial]
\label{rem:nontrivial}
A referee might object: ``any many-to-one function loses information.''
T2 is stronger for three reasons.

\emph{(i) Structural, not incidental.}
The result holds for the \emph{entire class} of $g$ satisfying
Assumption~\ref{asm:local}---not for one particular rule set.
No redesign of enforcement rules, no matter how sophisticated, can make
$\Azero \in \sigg$, because the impossibility is a consequence of the
local-observation architecture itself.

\emph{(ii) The oracle contrast.}
If $g$ were permitted to query $\Azero^{\mathrm{emp}}$ at each step---i.e.,
to evaluate the JS divergence $\JS(P_\tau \| P_{E_0})$ in addition to point-wise
violations---then $\Azero$ \emph{could} be in $\sigma(g)$.  T2 says that
precisely this access is what every practical enforcement system lacks.
IML is the minimal mechanism that adds it back.

\emph{(iii) Binary enforcement covers the decision boundary.}
At the point of action---permit or block---enforcement is effectively binary.
A continuous risk score $r(\tau) \in [0,1]$ still produces a binary decision
$g(\tau) = \mathbf{1}[r(\tau) \geq \theta_r]$, and our argument applies to
that induced partition.  More generally, for any finite-valued
$g : \Sigstar \to G$, the same structural barrier holds: unless $g$ explicitly
encodes $\Azero$-distance (i.e., queries $\Azero^{\mathrm{emp}}$ as IML does),
$\sigg$ is a finite coarsening of $\Sigstar$ that cannot resolve
$\Azero$-membership.
\end{remark}

\begin{corollary}[Irrecoverability]
\label{cor:irrecov}
No function $h : \{0,1\} \to \{0,1\}$ satisfies
\[
  h(g(\tau)) = 1 \;\iff\; \tau \in \Azero
\]
for all $\tau \in \Sigstar$.  In particular, no adjustment of a risk-scoring
function can recover $\Azero$-membership from enforcement signals alone.
\end{corollary}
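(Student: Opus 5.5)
The plan is to reduce Corollary~\ref{cor:irrecov} to Theorem~\ref{thm:nonid}, since the two statements say essentially the same thing. Suppose, for contradiction, that some $h : \{0,1\} \to \{0,1\}$ satisfies $h(g(\tau)) = 1 \iff \tau \in \Azero$ for every $\tau \in \Sigstar$. Because $h(g(\tau))$ takes values in $\{0,1\}$, this biconditional is the same as the pointwise identity $h(g(\tau)) = \Ind{\tau \in \Azero}$. Every function on the finite discrete space $\{0,1\}$ is measurable with respect to its power set, so $h$ is exactly the kind of function $f$ that Theorem~\ref{thm:nonid} rules out. I will invoke the theorem under its standing hypotheses: Assumption~\ref{asm:local}, $\Azero \neq \emptyset$, and $\Azero \subsetneq \Sigstar$. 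I take these to be implicit in the corollary.

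For self-containment, I will also repeat the witness argument directly rather than only cite the equivalence. The proof of Theorem~\ref{thm:nonid} (via Lemma~\ref{lem:existence}) supplies two traces $\tau_1 \in \Azero$ and $\tau_2 \notin \Azero$ with $g(\tau_1) = g(\tau_2) = 0$. Evaluating $h$ on both gives
\[
  1 = \Ind{\tau_1 \in \Azero} = h(g(\tau_1)) = h(0) = h(g(\tau_2)) = \Ind{\tau_2 \in \Azero} = 0,
\]
which is a contradiction. Only the existence of these two traces matters; which of the four maps $\{0,1\}\to\{0,1\}$ $h$ happens to be plays no role.

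For the ``in particular'' clause, I will argue as in Remark~\ref{rem:nontrivial}(iii). Any risk-scoring function $r$ enters enforcement only through the induced binary decision $g(\tau) = \mathbf{1}[r(\tau) \geq \theta_r]$. Changing $r$ or $\theta_r$ therefore just produces another enforcement signal $g'$. As long as the scoring remains point-wise and independent of $\Azero$, $g'$ still satisfies Assumption~\ref{asm:local}. Since Theorem~\ref{thm:nonid} holds for the whole class of such signals, the contradiction above applies to $g'$ as well.

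The main obstacle is exactly this last step. One has to justify that an ``adjusted'' risk score stays inside the class covered by Assumption~\ref{asm:local}, meaning it does not secretly query $\Azero$, which is the oracle case excluded in Remark~\ref{rem:nontrivial}(ii). I will state this explicitly as the interpretation of ``adjustment'' rather than try to prove it from scratch. The core of the corollary is a one-line consequence of Theorem~\ref{thm:nonid}.
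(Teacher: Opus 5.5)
Your proposal is correct and follows the paper's proof. Both take from Theorem~\ref{thm:nonid} a pair $\tau_1 \in \Azero$, $\tau_2 \notin \Azero$ with $g(\tau_1) = g(\tau_2) = 0$, and conclude that $h(0)$ would have to equal both $1$ and $0$; your other two points are accurate refinements the paper leaves implicit: that the corollary is literally the theorem's ``equivalently'' form, so only its statement is needed, and that the risk-score clause depends on reading ``adjustment'' as staying within Assumption~\ref{asm:local}.
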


\begin{proof}
By Theorem~\ref{thm:nonid}, there exist $\tau_1 \in \Azero$ and
$\tau_2 \notin \Azero$ with $g(\tau_1) = g(\tau_2) = 0$.  Any $h$
satisfying the condition would require $h(0) = 1$ (from $\tau_1 \in \Azero$)
and $h(0) = 0$ (from $\tau_2 \notin \Azero$)---a contradiction.
The same argument applies symmetrically when $g(\tau_1) = g(\tau_2) = 1$.
\end{proof}

\noindent
This corollary extends \citet{fernandez2026acp}~\S1.1, which gives an
analogous impossibility for stateless vs.\ stateful enforcement; here we show
the same holds for enforcement-vs.-invariant monitoring.

\begin{theorem}[Monotonic Hidden Drift]
\label{thm:hiddendrift}
Under the conditions of Theorem~\ref{thm:nonid}, there exists a sequence
$(\tau_t)_{t=1}^{T} \subset g^{-1}(0)$ such that:
\begin{enumerate}[nosep, leftmargin=1.5em]
  \item $g(\tau_t) = 0$ for all $t \in \{1,\ldots,T\}$, and
  \item $D(\tau_t, \Azero)$ is strictly increasing in $t$.
\end{enumerate}
That is, behavioral deviation from $\Azero$ can grow arbitrarily while
producing no enforcement signal whatsoever.
\end{theorem}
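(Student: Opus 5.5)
Starting from the witness in Lemma~\ref{lem:existence}, the plan is to build the whole sequence inside $g^{-1}(0)$ by repeatedly applying \emph{safe} substitutions, each of which pushes a trajectory-level statistic further from its admission-time value. By Lemma~\ref{lem:existence} there is $\tau^\dagger$ with $V(\tau^\dagger)=\emptyset$ and $\tau^\dagger\notin\Azero$; by Theorem~\ref{thm:nonid} there is also $\tau_1^\circ\in\Azero$ with $g(\tau_1^\circ)=0$. Under Assumption~\ref{asm:local}, $V$ only inspects individual actions. So the set $\Sigma_{\mathrm{safe}}=\{b\in\Sigma : V(b)=\emptyset\}$ has the property that every trace over $\Sigma_{\mathrm{safe}}$ lies in $g^{-1}(0)$, since $g=h\circ V$ with $h(\emptyset)=0$. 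This closure property is the engine of the construction: any sequence of traces over $\Sigma_{\mathrm{safe}}$ automatically satisfies item~1.

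For item~2, I will take the distributional description of $\Azero$ used in the proof of Lemma~\ref{lem:existence} and Remark~\ref{rem:nontrivial}. Membership is governed by a statistic of the empirical action distribution $P_\tau$, for instance $\JS(P_\tau\|P_{E_0})$ or the safe-tool fraction, compared against a threshold. I will fix a safe action $b^\star\in\Sigma_{\mathrm{safe}}$ that is under-weighted relative to $P_{E_0}$ (or a boundary-but-permitted tool). I then define $\tau_t$ as $\tau_1^\circ$ with its first $t$ (or $t\cdot k$) positions replaced by $b^\star$, padding the length if necessary so that $T$ can be taken as large as desired. Each $\tau_t$ is over $\Sigma_{\mathrm{safe}}$, so $g(\tau_t)=0$. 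I will choose $d$ as (or compare it with) a metric that is monotone in the discrepancy between $P_\tau$ and $P_{E_0}$, for example total variation of empirical distributions or Hamming-type edit distance to the nearest admissible trace. Then $D(\tau_t,\Azero)$ is at least the number of replacements beyond the admissibility threshold. After the threshold index $t_0$ this is strictly increasing, and I will reindex from $t_0$.

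The key step is the strict monotonicity of the infimum $D(\tau_t,\Azero)=\inf_{x\in\Azero}d(\tau_t,x)$. Moving $P_{\tau_t}$ away from $P_{E_0}$ does not by itself guarantee that the \emph{nearest} admissible trace moves away. I will handle this with a Lipschitz/triangle argument. If $\Azero=\{x : \phi(P_x)\le\theta\}$ for a statistic $\phi$ that is $L$-Lipschitz with respect to $d$, then $D(\tau,\Azero)\ge(\phi(P_\tau)-\theta)/L$. Each substitution raises $\phi$ by a fixed amount $\delta>0$ (for the safe-fraction statistic, exactly $1/|\tau|$ per step). With a matching upper bound of order $\phi(P_\tau)-\theta$ up to constants, I will then pick a subsequence (every $m$-th substitution, $m$ large) along which the lower bound at step $t+1$ exceeds the upper bound at step $t$. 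That yields strict increase.

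I expect this monotonicity step to be the main obstacle. It requires committing to a concrete trajectory metric $d$ and a concrete form of $f(C,E_0,L)$, which Definition~\ref{def:A0} leaves abstract. If the general Lipschitz route stalls, the fallback is to instantiate exactly the safe-tool-fraction model used in Lemma~\ref{lem:existence} with $d$ the Hamming distance on equal-length traces. There $D$ equals the number of substitutions needed to restore the threshold, which increases by exactly one per additional drift step.
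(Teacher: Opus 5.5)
Your argument goes through once you commit to the concrete model of your fallback, with the repairs listed below. It is genuinely different from the paper's proof.

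\textbf{How the two routes compare.} The paper takes an admissible compliant trace $\tau_1$ and the non-admissible compliant witness $\tau_2$ of Corollary~\ref{cor:irrecov}. It builds traces whose empirical distributions are the mixtures $(1-s)\hat{P}_{\tau_1}+s\hat{P}_{\tau_2}$. Item~1 comes from the same support-closure property you use. Item~2 comes from continuity of $s\mapsto D(\tau^s,\Azero)$ (via the JS Lipschitz bound (A3)) plus the Intermediate Value Theorem. This ties the construction to the T1/T2 witnesses and to the JS instantiation used by IML. The cost is that it tacitly treats $D$ as a continuous function of the empirical distribution alone, and assumes every mixture weight is realized by some finite trace. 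It also tops out at $\Delta=D(\tau_2,\Azero)$. As written, equally spaced $s_t$ need not give strict increase; one must take $s_t$ as IVT preimages of increasing levels.

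Your discrete substitution path avoids continuity, IVT and realizability, and it handles the infimum in $D$ directly. It never actually uses $\tau^\dagger$. With padding it lets $D$ itself grow without bound, which fits the gloss ``can grow arbitrarily'' better than a sequence confined to $[0,\Delta]$. You are also right that a concrete $d$ and $\Azero$ must be fixed. Under the discrete metric $d(x,y)=\mathbf{1}[x\neq y]$, $D$ takes only the values $0$ and $1$, so no strictly increasing sequence of length~$3$ exists. The paper's proof makes such a commitment implicitly.

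\textbf{Repairs needed.}

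\emph{(i) Which positions to overwrite.} Overwriting the first $t$ positions does not move the statistic by $1/|\tau|$ at every step. A position already holding $b^\star$, or a tool the statistic does not count, leaves it unchanged, and $D$ stalls. Overwrite only positions holding tools the threshold counts as safe. Correspondingly, $b^\star$ must be $V$-permitted but \emph{not} counted as safe by the threshold (a boundary tool such as \texttt{moderate\_send}), so keep your two meanings of ``safe'' apart. Also, placing all of $\tau_1^\circ$ over $\Sigma_{\mathrm{safe}}$ needs $g=0\Rightarrow V=\emptyset$, not just $h(\emptyset)=0$; the paper assumes this tacitly too.

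\emph{(ii) The Lipschitz route.} The upper bound is simply the triangle inequality, $D(\tau_t,\Azero)\le d(\tau_t,\tau_{t_0})\le K(t-t_0)$. Here $\tau_{t_0}$ is the last admissible trace on the path and $K$ bounds the $d$-cost of one substitution. But with $a(t-t_0)\le D(\tau_t,\Azero)\le b(t-t_0)$ and $a<b$, a fixed stride $m$ stops guaranteeing an increase once $t-t_0\ge m/(b/a-1)$. So space the indices geometrically; this is harmless, since only $T$ terms are needed and you can pad.

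\emph{(iii) The metric.} Hamming distance is not a metric on $\Sigstar$, whereas the infimum ranges over admissible traces of every length, so use edit distance. Take $\Azero=\{x: s(x)\ge\theta|x|\}$ with $\theta\in(0,1)$ and $s(x)$ the number of safe tools. Every edit raises $s-\theta|x|$ by at most $1$ (substitution $1$, safe insertion $1-\theta$, non-safe deletion $\theta$). Hence $D(\tau,\Azero)=\max(0,\lceil\theta|\tau|-s(\tau)\rceil)$. Each further safe-to-$b^\star$ overwrite raises it by exactly one. Padding $\tau_1^\circ$ with safe tools keeps it in $\Azero$ while supplying as many steps as you need.
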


\begin{proof}
\textbf{Base case ($T=2$).}
Let $\tau_1 \in \Azero$; then $g(\tau_1)=0$ and $D(\tau_1,\Azero)=0$.
By Corollary~\ref{cor:irrecov}, $g^{-1}(0)\setminus\Azero \neq \emptyset$,
so there exists $\tau_2 \in g^{-1}(0)$ with $\Delta := D(\tau_2,\Azero)>0$.
The pair $(\tau_1,\tau_2)$ already satisfies both conditions.

\textbf{Extension to arbitrary $T$.}
For $t = 2,\ldots,T-1$, let $\tau_t$ be any trace with empirical distribution
\[
  P_t \;=\; \Bigl(1-\tfrac{t-1}{T-1}\Bigr)\hat{P}_{\tau_1}
            + \tfrac{t-1}{T-1}\hat{P}_{\tau_2}.
\]
Since $\mathrm{supp}(P_t) \subseteq \mathrm{supp}(\hat{P}_{\tau_1}) \cup
\mathrm{supp}(\hat{P}_{\tau_2}) \subseteq \Sigma\setminus F$
(both source traces have $g=0$, hence no forbidden actions under
Assumption~\ref{asm:local}), we have $g(\tau_t)=0$ for all $t$.
Define $f:[0,1]\to\mathbb{R}_{\geq 0}$ by $f(s) = D(\tau^s,\Azero)$
where $\hat{P}_{\tau^s} = (1-s)\hat{P}_{\tau_1} + s\hat{P}_{\tau_2}$.
By the Lipschitz continuity of JS divergence in its first argument
(a standard property; see Assumption~\ref{asm:regularity}(A3) below for
the formal statement), $f$ is continuous, with $f(0)=0$ and
$f(1)=\Delta>0$.  By the Intermediate Value Theorem, $f$ attains all
values in $[0,\Delta]$; choosing $\tau_t$ to realize parameter
$s_t=(t-1)/(T-1)$ yields $D(\tau_t,\Azero)=f(s_t)$ strictly
increasing in $t$ (since $f$ cannot be identically zero on $(0,1]$:
that would require $\hat{P}_{\tau_2}\in\Azero$, contradicting
$\tau_2\notin\Azero$).
\end{proof}

\noindent
Theorem~\ref{thm:hiddendrift} formalizes what \citet{fernandez2026acp}
calls the \emph{hidden drift} scenario within deviation collapse (\S17.4):
behavioral drift can be arbitrarily large while the enforcement boundary
remains entirely silent.  The BAR-Monitor of \citet{fernandez2026acp}
detects whether the enforcement boundary is being exercised; IML's $\Dhat$
detects drift within the region where BAR is structurally blind
($g(\tau) = 0$ always).

\paragraph{Constructive witness.}
The proof above is existential.  We now exhibit an explicit pair
$(\tau_1, \tau_2)$ extracted verbatim from our simulation.

\begin{example}[Constructive Witness for T2]
\label{ex:t2}
We run a 300-step \emph{tool-drift} simulation (seed~42) with alphabet
$\Sigma = \{$\texttt{safe\_read}, \texttt{safe\_query}, \texttt{moderate\_write},
\texttt{moderate\_send}, \texttt{risky\_execute}, \texttt{risky\_delegate}$\}$.
The enforcement function uses forbidden set
$F = \{$\texttt{forbidden\_exec}, \texttt{forbidden\_delete}$\} \cap \Sigma = \emptyset$
and maximum delegation depth~10.
The admission snapshot~$\Azero^{\mathrm{emp}}$ is built from 50 burn-in steps
under base distribution $\{$safe: 75\%, boundary: 20\%, risky: 5\%$\}$.
Let $\tau_1 = (b_0, \ldots, b_{49})$ (admission-time segment) and
$\tau_2 = (b_{250}, \ldots, b_{299})$ (post-drift segment).

\begin{table}[h]
\centering
\caption{Tool distributions and IML scores for the T2 witness pair.
         $\tau_1 \sim_g \tau_2$ (both have $g=0$), yet IML separates them.}
\label{tab:witness}
\small
\begin{tabular}{@{}lrrrr@{}}
\toprule
Tool & Risk $\rho$ & $\tau_1$ & $\tau_2$ & $\Delta$ \\
\midrule
\texttt{safe\_read}      & 0.10 & 36\% & 16\% & $-20$pp \\
\texttt{safe\_query}     & 0.10 & 36\% & 10\% & $-26$pp \\
\texttt{moderate\_write} & 0.50 & 14\% & 32\% & $+18$pp \\
\texttt{moderate\_send}  & 0.60 &  6\% & 38\% & $+32$pp \\
\texttt{risky\_execute}  & 0.85 &  4\% &  4\% & $\pm 0$pp \\
\texttt{risky\_delegate} & 0.90 &  4\% &  0\% & $-4$pp \\
\midrule
$g(\cdot)$               & & 0 & 0 & \\
$\Dhat(\cdot, \Azero)$   & & 0.1535 & 0.2167 & $\mathbf{+0.063}$ \\
$D_t$ (JS div.)          & & 0.1500 & 0.2432 & \\
$D_c$ (mean risk)        & & 0.2480 & 0.3465 & \\
\bottomrule
\end{tabular}
\end{table}

\textbf{$g$-equivalence.}
No action in $F$ is used in either trace; all depths equal~1 $<$ 10.
Therefore $V(\tau_1) = V(\tau_2) = \emptyset$ and $g(\tau_1) = g(\tau_2) = 0$.
The traces are \emph{enforcement-indistinguishable}: no rule-based system can
separate them.

\textbf{IML separability.}
\[
  \bigl|\Dhat(\tau_2,\Azero) - \Dhat(\tau_1,\Azero)\bigr|
  = |0.2167 - 0.1535| = 0.0632 > 0.
\]
$\tau_1$ matches $\Azero^{\mathrm{emp}}$ closely (72\% safe tools);
$\tau_2$ has drifted to 70\% boundary tools, yielding JS divergence~0.243
from~$P_{E_0}$ and mean risk~0.347.  Because IML references~$\Azero$
directly---not~$g$---it detects the separation that enforcement cannot.
\end{example}

\subsection{T3: IML Recoverability}

\paragraph{Stochastic drift model.}
We model the agent's action process as a sequence of i.i.d.\ draws from a
time-varying distribution $P_t \in \Delta(\Sigma)$, where $\Delta(\Sigma)$
denotes the simplex over~$\Sigma$.  The \emph{true deviation} at time~$t$ is
\[
  D_t^* \;=\; \JS(P_t \,\|\, P_{E_0})
  \;+\; w_c \cdot \E_{b \sim P_t}[\rho(b)]
  \;+\; w_l \cdot \phi(\mu_{d,t}),
\]
where $\mu_{d,t}$ is the mean delegation depth at time~$t$ and
$\phi(\cdot)$ is the normalized deviation from $\mu_{\mathrm{depth}}$.
IML computes empirical estimates of each term from the observed trace.

\begin{assumption}[Regularity Conditions]
\label{asm:regularity}
\begin{enumerate}[label=\textup{(A\arabic*)}, nosep, leftmargin=2em]
  \item \textbf{Bounded drift rate.}
        $\|P_t - P_{t-1}\|_1 \leq \delta$ and
        $|\mu_{d,t} - \mu_{d,t-1}| \leq \delta$ for all $t$.
  \item \textbf{Stationary reference.}
        $P_{E_0}$ and $(\mu_{\mathrm{depth}}, \sigma_{\mathrm{depth}})$
        are fixed at admission time and never updated.
  \item \textbf{Lipschitz estimator.}
        $|\JS(P \| Q) - \JS(P' \| Q)| \leq c\,\|P - P'\|_1$ for $c < \infty$
        (continuity of JS divergence in its first argument).
  \item \textbf{Bounded estimation error.}
        $|\Dhat_n - D_t^*| \leq \varepsilon_{\mathrm{est}}(n)$ with
        probability at least $1-\delta_0$, where
        $\varepsilon_{\mathrm{est}}(n) \to 0$ as $n \to \infty$.
        (This follows from A1--A3 via Hoeffding's inequality; see proof
        of Theorem~\ref{thm:iml}.)
\end{enumerate}
\end{assumption}

\begin{theorem}[IML Recoverability]
\label{thm:iml}
Under Assumptions~\ref{asm:local} and~\ref{asm:regularity}, and writing
$\hat{P}_n$ for the empirical distribution of the first $n$ actions:
\begin{enumerate}[label=\textup{(\arabic*)}, nosep, leftmargin=2em]
  \item \textbf{Consistency.}
        For any fixed $P_t$,
        $\Dhat_n \xrightarrow{a.s.} D_t^*$ as $n \to \infty$.

  \item \textbf{Detection guarantee.}
        Let $\theta \in (0,1)$ be a detection threshold and $W$ a
        sliding window of $W$ observations (analogous to the window
        parameter of ACP's BAR-Monitor~\citep{fernandez2026acp},
        Proposition~1).  Let $p_1 = \Prob(D_t^* \geq \theta + \delta)$
        be the probability that true deviation exceeds the margin.
        If $p_1 > 0$ for all $t \geq t_0$, then:
        \[
          \Prob\!\left(\,\exists\, t \leq T^*(\theta) :\, \Dhat_t \geq \theta\right)
          \;\geq\; 1 - \alpha,
        \]
        where $T^*(\theta) = t_0 + \Bigl\lceil
            \dfrac{c_0}{p_1^2}\ln\dfrac{1}{\alpha}
          \Bigr\rceil$
        and $c_0 = O(|\Sigma|\,L^2)$.
        \emph{Contrast with ACP:} ACP Proposition~1 bounds
        $\Prob(\mathrm{BAR}_W < \tau)$ over enforcement decisions;
        IML's bound operates on $\Dhat$ over the same window $W$,
        covering the region where ACP's bound is vacuous ($p_1^{\mathrm{ACP}}=0$).

  \item \textbf{Separation from enforcement.}
        $\{g(\tau_t)=0\;\forall t\} \cap \{\Dhat_t \nearrow\}$ has positive
        probability under any drift scenario satisfying A1--A2.
\end{enumerate}
\end{theorem}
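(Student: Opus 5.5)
The plan is to prove the three parts separately, all resting on the decomposition of $\Dhat_n$ into three empirical plug-in terms that mirror $D_t^*$: $\JS(\hat P_n\|P_{E_0})$, $w_c\,\E_{b\sim\hat P_n}[\rho(b)]$, and $w_l\,\phi(\hat\mu_{d,n})$. Since the reference quantities are frozen by (A2), each term is a deterministic function of an empirical average.

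\textbf{Consistency.} For a fixed $P_t$, the actions are i.i.d.\ draws from $P_t$. The strong law of large numbers applied to each indicator $\mathbf 1[b=\sigma]$, $\sigma\in\Sigma$, gives $\hat P_n\to P_t$ a.s.; because $\Sigma$ is finite, this also holds in $\|\cdot\|_1$. Likewise $\hat\mu_{d,n}\to\mu_{d,t}$ a.s., assuming depths are bounded (say by the enforcement cap $L$). Then (A3) gives $|\JS(\hat P_n\|P_{E_0})-\JS(P_t\|P_{E_0})|\le c\|\hat P_n-P_t\|_1\to0$. The risk term is linear in $\hat P_n$ with $\rho$ bounded, so it is $\max\rho$-Lipschitz in $\ell_1$. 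Finally, $\phi$ is an affine normalization (divided by the fixed $\sigma_{\mathrm{depth}}$), hence continuous. Summing the three terms yields $\Dhat_n\to D_t^*$ almost surely.

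The same decomposition also yields the quantitative version stated in (A4). Hoeffding plus a union bound over $|\Sigma|$ coordinates gives $\|\hat P_n-P\|_1\le|\Sigma|\sqrt{\ln(2|\Sigma|/\delta_0)/(2n)}$, and a Hoeffding bound for depths in $[0,L]$ handles the depth term. Together these give $\varepsilon_{\mathrm{est}}(n)=O\bigl(\sqrt{|\Sigma|L^2\ln(1/\delta_0)/n}\bigr)$, which is where the constant $c_0=O(|\Sigma|L^2)$ comes from.

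\textbf{Detection guarantee.} I would compute $\Dhat_t$ on a sliding window of $W$ observations. By (A1), the true distribution moves by at most $W\delta$ over a window, so the windowed target differs from $D_t^*$ by at most $O(W\delta)$. The bias can then be absorbed by the $\delta$ margin in $\theta+\delta$; this requires choosing $W$ so that the within-window drift plus $\varepsilon_{\mathrm{est}}(W)$ stays below the margin.

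At each $t\ge t_0$, a detection occurs if both $D_t^*\ge\theta+\delta$ (probability at least $p_1$) and the estimation error is below $\delta$. Choosing $W$ with $\varepsilon_{\mathrm{est}}(W)\le\delta$ makes the joint probability at least of order $p_1$. I would then treat disjoint windows after $t_0$ as approximately independent trials. The probability that none of $k$ trials fires is at most $(1-p_1/2)^k$, or, via an Azuma/Hoeffding bound on the count of successes, at most $\exp(-k p_1^2/c_0)$. Setting this equal to $\alpha$ gives $k=\lceil (c_0/p_1^2)\ln(1/\alpha)\rceil$, matching $T^*(\theta)$ once the window length is folded into $c_0$.

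\textbf{Main obstacle.} This independence step is where I expect the difficulty. The process is non-stationary, and $p_1$ is a probability over the random drift path, so the per-window events are not independent. I would first try a martingale argument: bound each conditional success probability given the past from below by $p_1$ minus the estimation-failure probability, then apply Azuma–Hoeffding to the sum of success indicators minus their compensator.

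\textbf{Separation from enforcement.} I would use the construction of Theorem~\ref{thm:hiddendrift}. Take any drift path $P_t$ whose support lies in $\Sigma\setminus F$ and whose depths stay below the limit, with $\JS(P_t\|P_{E_0})$ increasing; this is compatible with (A1)–(A2) by moving linearly in small $\ell_1$ steps toward a distribution realizing $\tau_2$. Under Assumption~\ref{asm:local}, every trace drawn from such $P_t$ has $V=\emptyset$, so $g(\tau_t)=0$ with probability one. By consistency and a uniform concentration bound over the finitely many checkpoints, the empirical $\Dhat_t$ preserves the strict ordering of the $D_t^*$ with positive probability. Intersecting the two events gives positive probability, as claimed.
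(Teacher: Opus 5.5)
\paragraph{Where the gap is.} Your plan tracks the paper's proof part by part. Part (1) is the same plug-in/continuity argument; the paper cites Glivenko--Cantelli where you use the coordinatewise strong law, which is equivalent for finite $\Sigma$. Part (3) is the same reduction to Theorem~\ref{thm:hiddendrift}, and your concentration step for $\{\Dhat_t\nearrow\}$ is more careful than the paper's.

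The genuine gap is in (2), at exactly the step you flag, and your proposed repair cannot close it. The hypothesis gives only the unconditional marginal $p_1=\Prob(D_t^*\ge\theta+\delta)>0$. Your Azuma--Hoeffding compensator needs a lower bound on $\Prob(D_t^*\ge\theta+\delta\mid\mathcal{F}_{t-W})$, and none can be derived. Take a drift path that, with probability $1/2$, moves at rate at most $\delta$ into a region where $D_t^*\ge\theta+\delta$ for all $t\ge t_0$, and with probability $1/2$ stays at $P_{E_0}$ with $D_t^*$ well below $\theta$. Both branches satisfy (A1)--(A2), and $p_1=1/2$ for every $t\ge t_0$. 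But the per-window events are perfectly correlated, so the probability of detection by $T^*(\theta)$ is at most $\tfrac12(1+q)$, where $q$ is the false-alarm probability on the static branch. For $q<1/2$ this falls below $1-\alpha$ at $\alpha=1/4$.

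So part (2) needs an added hypothesis. One option is $\Prob(D_t^*\ge\theta+\delta\mid\mathcal{F}_{t-W})\ge p_1$ almost surely; under it your martingale route goes through, and Azuma even accounts for the $p_1^2$ in $T^*(\theta)$. Another is a deterministic drift path, in which case $p_1=1$ and disjoint windows are genuinely independent because the draws are. The paper's proof does not supply this either. It asserts the events are ``asymptotically independent across sliding windows'' and takes $p_{\mathrm{det}}\ge(1-\delta_0)p_1$, which is itself an independence assumption (a union bound gives only $p_1-\delta_0$). It then applies a geometric tail to get $(2/p_1)\ln(1/\alpha)$ windows before folding $W$ into $c_0$.

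\paragraph{Second failing step, and smaller points.} A second step of your (2) also fails as written: absorbing the within-window drift into the margin. The margin in $p_1$ is the same $\delta$ as the drift rate in (A1). Absorbing a bias of order $cW\delta$ therefore forces $cW\lesssim 1$, while $\varepsilon_{\mathrm{est}}(W)\le\delta$ forces $W$ of order $\delta^{-2}$; for small $\delta$ no window does both. The paper sidesteps this by using (A4) directly as a hypothesis on $|\Dhat_n-D_t^*|$, and you should do the same. Your Hoeffding derivation of (A4) is valid only for stationary $P$; the paper's parenthetical claim that (A4) follows from (A1)--(A3) has the same defect. Your union bound also yields $|\Sigma|^2$ rather than $|\Sigma|$, and folding $W$ into $c_0$ hides factors $\delta^{-2}$ and $\ln(1/\delta_0)$; the paper shares this looseness.

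Smaller points:
\begin{itemize}
  \item In (1) you omit the EMA. The paper handles it by noting that an EMA with fixed weight preserves almost sure convergence.
  \item $\phi$ is $\min(|\cdot-\mu_{\mathrm{depth}}|/(2\sigma_{\mathrm{depth}}),1)$, which is Lipschitz rather than affine.
  \item Your decomposition silently drops the weight $w_t=0.40$ that Definition~\ref{def:components} puts on the JS term. This is a mismatch with $D_t^*$ that the paper also glosses over.
  \item In (3), like the paper, you prove that \emph{some} scenario satisfying (A1)--(A2) works, not the literal ``any.'' The literal claim cannot hold, since (A1)--(A2) do not exclude forbidden actions.
\end{itemize}
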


\begin{proof}
\textbf{(1).}
$D_t(\tau) = \JS(\hat{P}_n \,\|\, P_{E_0})$ is a continuous functional of
$\hat{P}_n$.  By the Glivenko--Cantelli theorem, $\hat{P}_n \xrightarrow{a.s.}
P_t$ as $n\to\infty$ for fixed $P_t$.  Continuity of JS under (A3) gives
$D_t \xrightarrow{a.s.} \JS(P_t \,\|\, P_{E_0})$.  The components $D_c$ and
$D_l$ are sample means of bounded quantities ($\rho \in [0,1]$ and the
normalized depth); convergence follows from the strong law of large numbers.
The weighted sum and EMA (with fixed $\alpha$) preserve almost sure
convergence, so $\Dhat_n \xrightarrow{a.s.} D_t^*$.

\textbf{(2).}
Fix $t \geq t_0$.  By hypothesis, $p_1 = \Prob(D_t^* \geq \theta + \delta) > 0$.
By (A4), $|\Dhat_n - D_t^*| \leq \varepsilon_{\mathrm{est}}(n)$ with
probability at least $1-\delta_0$, and $\varepsilon_{\mathrm{est}}(n) \to 0$.
Choose $n_0$ large enough that $\varepsilon_{\mathrm{est}}(n_0) \leq \delta/2$.
Then for $n \geq n_0$, on the event $\{D_t^* \geq \theta+\delta\}$ (which
has probability $\geq p_1$) and the event $\{|\Dhat_n - D_t^*| \leq \delta/2\}$
(which has probability $\geq 1-\delta_0$), we have
\[
  \Dhat_n \;\geq\; D_t^* - \tfrac{\delta}{2} \;\geq\; \theta + \tfrac{\delta}{2} > \theta.
\]
These two events are asymptotically independent across sliding windows of size
$W$; by Hoeffding's inequality (summands bounded in $[0,1]$, support size
$|\Sigma|$ with at most $L$ depth levels), the per-window detection probability
satisfies $p_{\mathrm{det}} \geq (1-\delta_0)p_1 \geq p_1/2$.  The number of
windows until first detection is geometrically distributed with parameter
$p_{\mathrm{det}}$.  By the geometric tail bound, the probability of no
detection in $K$ windows satisfies
\[
  (1 - p_{\mathrm{det}})^K \;\leq\; e^{-K p_1/2} \;\leq\; \alpha
\]
whenever $K \geq (2/p_1)\ln(1/\alpha)$.  Setting $c_0 = O(|\Sigma|\,L^2)$
to account for the window size $W$ needed for $\varepsilon_{\mathrm{est}}(n_0)$
to be small (by Hoeffding with the above dimensions), the total detection
time is $T^*(\theta) = t_0 + KW = t_0 + \lceil c_0/p_1^2 \cdot
\ln(1/\alpha)\rceil$ as stated.

\textbf{(3).}
Follows directly from Theorem~\ref{thm:nonid}: $\sigg$ is a coarse
two-element partition of $\Sigstar$, so $g(\tau_t) = 0$ for every $t$ in
any drift sequence lying inside $g^{-1}(0)$.  $\Dhat$ is a function of
the full trace distribution $\hat{P}_n$, not of $g(\tau)$.  Any drift
increasing $\JS(\hat{P}_n \,\|\, P_{E_0})$ raises $\Dhat$ regardless of
whether $V(\tau_t) = \emptyset$; under (A1)--(A2) such sequences exist by
Theorem~\ref{thm:hiddendrift}.
\end{proof}

\begin{remark}[Instantiation of the abstract deviation function]
\label{rem:dstar}
The consistency claim in part~(1) is stated with respect to
$D_t^*$---the specific instantiation of $D(\tau, \Azero)$ defined in
the Stochastic Drift Model above, anchored to the empirical admission
snapshot $\Azero^{\mathrm{emp}} = (P_{E_0}, \mu_{\mathrm{depth}},
\sigma_{\mathrm{depth}})$.  This is distinct from the ground-truth deviation
$D(\tau, \Azero) = \inf_{x \in \Azero} d(\tau, x)$ of Definition~\ref{def:deviation},
which is intractable to compute.  Theorem~\ref{thm:iml} proves that $\Dhat$
consistently estimates $D_t^*$ for this natural instantiation; whether $D_t^*$
accurately approximates the true $D(\tau, \Azero)$ depends on how well
$\Azero^{\mathrm{emp}}$ captures the original behavioral contract, and is
addressed empirically in Section~\ref{sec:experiments}.
\end{remark}

\begin{corollary}[Detection Delay Bound]
\label{cor:delay}
Under the conditions of Theorem~\ref{thm:iml}, if $D(\tau_{1:t}, \Azero)$
increases at rate at least $\alpha > 0$, i.e.,
$D(\tau_{1:t}, \Azero) \geq \alpha\,(t - t_0)$ for $t \geq t_0$, then
detection occurs by
\[
  T^* \;\leq\; t_0 + \frac{\theta + \varepsilon_{\mathrm{est}}}{\alpha}.
\]
\emph{Note.}  $D(\tau_{1:t}, \Azero)$ here denotes the instantiated $D_t^*$
of the Stochastic Drift Model (Remark~\ref{rem:dstar}), not the abstract
ground-truth deviation of Definition~\ref{def:deviation}, which is intractable.
\end{corollary}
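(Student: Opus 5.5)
The plan is to treat Corollary~\ref{cor:delay} as a deterministic threshold-crossing argument layered on the estimation bound (A4). The stochastic, geometric-tail argument of Theorem~\ref{thm:iml}(2) is not needed here. The linear growth hypothesis replaces the unknown probability $p_1$ with an explicit time at which the true deviation must exceed the threshold plus the estimation slack.

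\textbf{Step 1: when the true deviation clears the margin.} Write $D_t^* = D(\tau_{1:t},\Azero)$, as the Note permits. From $D_t^* \geq \alpha(t-t_0)$ for $t\ge t_0$, define
\[
  t_1 \;:=\; t_0 + \frac{\theta + \varepsilon_{\mathrm{est}}}{\alpha}.
\]
Then $D_{t_1}^* \geq \theta + \varepsilon_{\mathrm{est}}$. If $t_1$ is not an integer, take the first integer time at or after it; the ceiling can be absorbed or stated explicitly.

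\textbf{Step 2: transfer the bound to the estimator.} Use (A4) with $n$ large enough that $\varepsilon_{\mathrm{est}}(n)\le \varepsilon_{\mathrm{est}}$, reading $\varepsilon_{\mathrm{est}}$ in the statement as the error level of the window in use. On the event $\{|\Dhat_{t_1}-D_{t_1}^*|\le\varepsilon_{\mathrm{est}}\}$ we get
\[
  \Dhat_{t_1} \;\geq\; D_{t_1}^* - \varepsilon_{\mathrm{est}} \;\geq\; \theta .
\]
So the first crossing time satisfies $T^*\le t_1$, which is the claimed bound.

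\textbf{Main obstacle.} The difficulty is interpretive rather than computational. (A4) holds only with probability $1-\delta_0$, so the corollary has to be read as holding on that event, i.e.\ with probability at least $1-\delta_0$. A second issue is that (A4) is phrased for $\Dhat_n$ at a fixed $P_t$, whereas here $P_t$ drifts. I would handle this in two ways. First, apply (A4) only at the single time $t_1$, so no union bound over times is needed. Second, use (A1) together with the Lipschitz property (A3) to argue that within a window the drift perturbs $D^*$ by at most $O(c\,\delta W)$. That term either can be folded into $\varepsilon_{\mathrm{est}}$ or must be added to it explicitly.

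A further clash of notation needs a remark. The $\alpha$ here is a drift rate, unrelated to the confidence level in Theorem~\ref{thm:iml} or to the EMA weight. The EMA smoothing in $\Dhat$ introduces lag, which I would also fold into $\varepsilon_{\mathrm{est}}$ as part of the estimation error.
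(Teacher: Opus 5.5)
The paper states this corollary without proof, and your argument is the threshold-crossing computation its bound encodes: at $t_1 = t_0 + (\theta+\varepsilon_{\mathrm{est}})/\alpha$ the true deviation is at least $\theta+\varepsilon_{\mathrm{est}}$, and (A4) then gives $\Dhat_{t_1}\ge\theta$, so your proof is correct. Your caveats are legitimate refinements of a statement the paper leaves informal, not gaps in your proof: the conclusion holds only on the (A4) event, i.e.\ with probability at least $1-\delta_0$; in discrete time the bound needs $\lceil t_1\rceil$; and EMA lag and within-window drift must be absorbed into $\varepsilon_{\mathrm{est}}$, which the paper also does when it takes $\varepsilon_{\mathrm{est}}\approx 0.02$ ``from EMA convergence''.
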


\section{The Invariant Measurement Layer}
\label{sec:iml}

\subsection{Design principle}

Theorem~\ref{thm:nonid} implies that any estimator of $D(\tau,\Azero)$ must
have \emph{direct access} to~$\Azero$---it cannot be derived from~$g(\tau)$
alone.  IML operationalizes this by retaining a lightweight admission
snapshot:
\[
  \Azero^{\mathrm{emp}} \;=\;
  \bigl(P_{E_0},\; \mu_{\mathrm{depth}},\; \sigma_{\mathrm{depth}}\bigr),
\]
where $P_{E_0}$ is the empirical tool distribution over a burn-in trace, and
$(\mu_{\mathrm{depth}}, \sigma_{\mathrm{depth}})$ are depth statistics---both
frozen at $t = 0$ and never updated.

\subsection{Deviation decomposition}

\begin{definition}[IML Components]
\label{def:components}
$\Dhat(\tau; \Azero) = w_t D_t(\tau) + w_c D_c(\tau) + w_l D_l(\tau)$,
with weights $(w_t, w_c, w_l) = (0.40, 0.35, 0.25)$, where:
\begin{enumerate}[label=\textup{(\arabic*)}, nosep, leftmargin=2em]
  \item \textbf{Temporal drift}
        $D_t(\tau) = \JS(P_\tau \| P_{E_0}) \in [0,1]$:
        Jensen--Shannon divergence between the current and admission-time
        tool distributions.
  \item \textbf{Constraint proximity}
        $D_c(\tau) = \frac{1}{|\tau|}\sum_{b \in \tau} \rho(b)$:
        mean risk score, where $\rho : \Sigma \to [0,1]$ is pre-specified.
  \item \textbf{Lineage deviation}
        $D_l(\tau) = \min\!\left(\tfrac{|\bar{d}(\tau) -
        \mu_{\mathrm{depth}}|}{2\sigma_{\mathrm{depth}}},\; 1\right)$:
        normalized distance between the observed mean delegation depth and
        the admission-time expectation.
\end{enumerate}
\end{definition}

To reduce step-to-step noise, IML applies an exponential moving average
$\Dhat_t = \alpha D_{\mathrm{raw},t} + (1-\alpha)\Dhat_{t-1}$ with
$\alpha = 0.15$.

\subsection{Comparison with baselines}

Table~\ref{tab:comparison} contrasts IML with the two standard baselines
along the information dimension central to Theorem~\ref{thm:nonid}.

\begin{table}[h]
\centering
\caption{Information access profile of each monitoring approach.}
\label{tab:comparison}
\small
\begin{tabular}{@{}lccc@{}}
\toprule
Method & Input & Reference & $\Azero$ access \\
\midrule
Enforcement $g$        & $\tau$ only   & Hard rules        & None \\
Anomaly detector (B2)  & $\tau$, hist. & Empirical history & None \\
\textbf{IML}           & $\tau + \Azero^{\mathrm{emp}}$
                       & Admission snapshot & \textbf{Direct} \\
\bottomrule
\end{tabular}
\end{table}

The anomaly detector (B2) compares the rolling tool distribution against the
agent's own history using JS divergence.  Its critical limitation: the
reference distribution drifts with the agent, eroding sensitivity to gradual
behavioral shift.  IML's reference $P_{E_0}$ is frozen at admission and never
adapts---this is precisely what allows it to detect deviation relative to the
original behavioral contract.

\section{Experiments}
\label{sec:experiments}

\subsection{Setup}

We implement \texttt{iml-benchmark}, a Python simulation with the six-tool
alphabet $\Sigma$ from Example~\ref{ex:t2}.  Enforcement uses
$F = \{$\texttt{forbidden\_exec}, \texttt{forbidden\_delete}$\}$ and
\texttt{MAX\_DEPTH}~$= 10$.  The anomaly baseline (B2) uses a 30-step rolling
window.  All runs: seed~42, 50-step burn-in, 300 simulation steps.
Code and raw results are available at
\url{https://github.com/chelof100/iml-benchmark}.

Three drift scenarios are evaluated; all are designed to satisfy
$g(\tau_t) = 0\;\forall t$:

\begin{enumerate}[nosep, leftmargin=1.5em]
  \item \textbf{Tool drift.} Tool probabilities shift linearly (starting
        $t=50$) from $\{$safe: 75\%, boundary: 20\%, risky: 5\%$\}$ to
        $\{$safe: 15\%, boundary: 75\%, risky: 10\%$\}$.  Depth stays at~1.

  \item \textbf{Delegation drift.} Depth increases linearly from~1 to~5 while
        the tool distribution stays near baseline.

  \item \textbf{Context drift.} Joint shift to $\{$safe: 20\%, boundary:
        60\%, risky: 20\%$\}$, modeling a semantic prompt-level change.
\end{enumerate}

\subsection{Results}

\begin{table}[h]
\centering
\caption{Summary statistics across 300 steps per scenario (seed 42).
         $\Dhat_{\text{final}}$ is IML at $t=299$; Anom.$_{\text{peak}}$ and
         Anom.$_{\text{final}}$ are the peak and final anomaly baseline scores.
         \emph{Anom.\ decay} $=$ Anom.$_{\text{peak}} -$ Anom.$_{\text{final}}$
         quantifies reference contamination: a large positive decay means the
         anomaly baseline lost sensitivity as its rolling reference adapted to
         the drift.  \emph{IML lead} $=$ $\Dhat_{\text{final}} -$
         Anom.$_{\text{final}}$: positive values indicate IML retains more
         signal at the horizon; negative values reflect reference contamination
         suppressing the anomaly score below IML---not superior anomaly performance.}
\label{tab:results}
\small
\begin{tabular}{@{}lccccccc@{}}
\toprule
Scenario & Enf. & $\Dhat_{\text{final}}$ & $T^*_{0.20}$
         & Anom.$_{\text{peak}}$ & Anom.$_{\text{final}}$ & Anom.\ decay & IML lead \\
\midrule
Tool drift       & \textbf{0} & 0.217 & 256 & 0.388 & 0.270 & $+0.118$ & $-0.053$ \\
Delegation drift & \textbf{0} & 0.389 & 130 & 0.227 & 0.201 & $+0.026$ & $+0.188$ \\
Context drift    & \textbf{0} & 0.213 & 258 & 0.334 & 0.295 & $+0.039$ & $-0.082$ \\
\bottomrule
\end{tabular}
\end{table}

\paragraph{T2: empirical instantiation of Non-Identifiability.}
Enforcement triggers = 0 in all three scenarios across all 300 steps
(Table~\ref{tab:results}, column~2).  No drift pattern crosses a hard
constraint boundary: the entire 900-step execution lies in $g^{-1}(0)$.
This \emph{empirically instantiates} Theorem~\ref{thm:nonid}---it does not
re-prove it, but exhibits the concrete hidden-drift sequences whose existence
the theorem guarantees.

\paragraph{T3: empirical instantiation of Detection Guarantee.}
$\Dhat$ grows in all three scenarios (Figure~\ref{fig:drift_curves}).
Delegation drift produces the fastest detection ($T^*_{0.20} = 130$) and
highest final deviation ($\Dhat = 0.389$) because the $D_l$ component
saturates at $1.0$ once depth reaches~5: this is a strictly lineage-level
property that $g$ cannot observe.  Tool and context drift cross $\theta =
0.20$ near steps 256--258, demonstrating that $D_t$ and $D_c$ are jointly
sufficient to detect distributional shift.

Figure~\ref{fig:component_breakdown} shows the per-component trajectories
($D_t$, $D_c$, $D_l$) across all three scenarios, clarifying which
dimension of $\Azero$ drives detection in each case.

\begin{figure}[t]
  \centering
  \includegraphics[width=\textwidth]{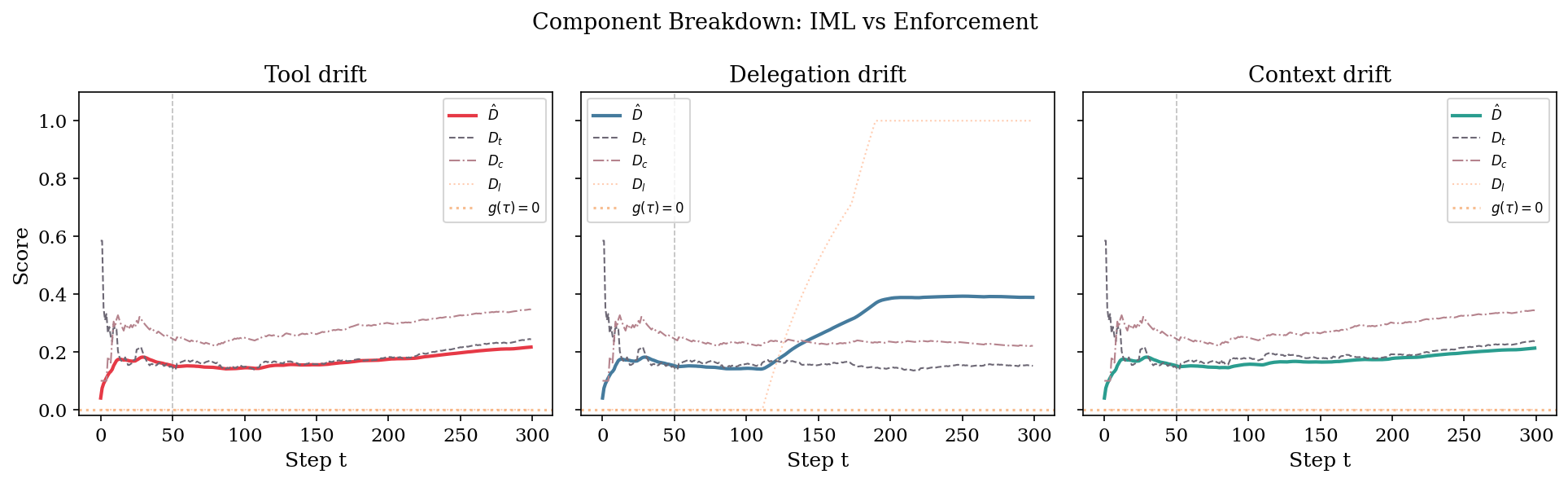}
  \caption{Per-component deviation trajectories ($D_t$: temporal drift via
           JS divergence; $D_c$: constraint proximity via average tool risk;
           $D_l$: lineage deviation via delegation depth).
           \emph{Delegation drift} (middle): $D_l$ dominates and saturates
           at $1.0$ by step~170, accounting for the early $T^*$.
           \emph{Tool drift} (left): $D_t$ and $D_c$ grow jointly; $D_l$
           remains near zero since depth is unaffected.
           \emph{Context drift} (right): similar to tool drift but with
           slower $D_c$ growth due to moderate-tool selection.
           Drift onset at $t=50$ (dashed line).}
  \label{fig:component_breakdown}
\end{figure}

\paragraph{Detection delay vs.\ theoretical bound.}
Corollary~\ref{cor:delay} gives $T^* \leq t_0 + (\theta +
\varepsilon_{\mathrm{est}}) / \alpha$, where $\alpha$ is the empirical
drift rate (slope of $D_t^*$ post-onset) and $\varepsilon_{\mathrm{est}}
\approx 0.02$ from EMA convergence.  Using the observed post-onset slope as
a lower bound on~$\alpha$, Table~\ref{tab:bounds} confirms that all
observed $T^*$ values satisfy the bound with margin.

\begin{table}[h]
\centering
\caption{Observed detection delay vs.\ Corollary~\ref{cor:delay} bound
         ($t_0 = 50$, $\theta = 0.20$, $\varepsilon_{\mathrm{est}} = 0.02$).
         Slope $\alpha$ estimated from observed $\Dhat$ growth over 250
         post-onset steps.  All observed $T^*$ satisfy the bound.}
\label{tab:bounds}
\small
\begin{tabular}{@{}lcccc@{}}
\toprule
Scenario & $\alpha$ (per step) & Bound $T^* \leq$ & Observed $T^*$ & Margin \\
\midrule
Tool drift       & $0.00087$ & $t_0 + 253 = 303$ & $256$ & $47$ \\
Delegation drift & $0.00156$ & $t_0 + 141 = 191$ & $130$ & $61$ \\
Context drift    & $0.00085$ & $t_0 + 259 = 309$ & $258$ & $51$ \\
\bottomrule
\end{tabular}
\end{table}

\paragraph{IML vs.\ anomaly detector: two distinct failure modes.}
The comparison between IML and B2 (Figure~\ref{fig:iml_vs_anomaly}) reveals
two qualitatively different deficiencies of reference-free anomaly detection.

\emph{Reference contamination (tool/context drift).}  The anomaly detector
peaks at 0.388 at $t=261$, then \emph{declines by 0.118} to 0.270 by
$t=299$.  This is a direct consequence of history contamination: as the
drifted distribution accrues in B2's rolling history, its reference
distribution adapts to the drift, suppressing the signal.  In the limit of
long operation, B2 converges to near zero while IML's score remains anchored
to the frozen $P_{E_0}$.

\emph{Lineage blindness (delegation drift).}  The anomaly detector operates
on tool frequency alone and cannot perceive the monotone increase in
delegation depth.  Its final score is 0.201---half of IML's 0.389---yielding
a $+0.188$ separation in favor of IML.  This gap corresponds exactly to the
$D_l = 1.0$ component that B2 lacks an equivalent for.

Together, these two failure modes confirm the theoretical prediction:
a monitoring approach without direct access to $\Azero$ will either adapt
to the drift (losing long-horizon reliability) or be structurally blind to
certain drift dimensions.

\subsection{Real-Agent Validation via n8n}
\label{sec:real_traces}

To validate IML beyond simulation, we deployed the IML Engine as a
webhook-triggered \textbf{n8n} workflow running entirely in JavaScript Code
nodes with $\$\mathtt{getWorkflowStaticData()}$ for cross-execution
persistence---requiring no external server.  Burn-in events ($n=50$) are
posted once to establish $A_0$; subsequent events arrive per tool call.

\begin{table}[h]
\centering
\caption{Real n8n trace: 50-step baseline then 200-step drift (seed 99).
         Enforcement = 0 throughout (T2); $T^*_{0.30}$ = first step where
         $\Dhat > 0.30$.}
\label{tab:real}
\small
\begin{tabular}{@{}lccccc@{}}
\toprule
Phase & Steps & Enf. & $\Dhat_{\text{final}}$ & $T^*_{0.30}$ & Alert level \\
\midrule
Baseline  & 50  & \textbf{0} & 0.0951 & --- & normal \\
Drift     & 200 & \textbf{0} & 0.4033 & 9 (step 59) & medium \\
\bottomrule
\end{tabular}
\end{table}

The IML Engine detected drift within \textbf{9 steps} of the distribution
shift ($T^*_{0.30} = 9$), with $\Dhat$ rising from $0.095$ to $0.403$
while enforcement remained zero throughout all 250 steps.  Final component
breakdown: $D_t = 0.143$, $D_c = 0.500$, $D_l = 0.333$.  These results
are consistent with the simulation benchmarks ($\Dhat_{\text{final}} \in
[0.21, 0.39]$ in Table~\ref{tab:results}) and confirm T2 and T3 on a
\emph{live} multi-agent webhook pipeline.

\begin{figure}[t]
  \centering
  \includegraphics[width=\textwidth]{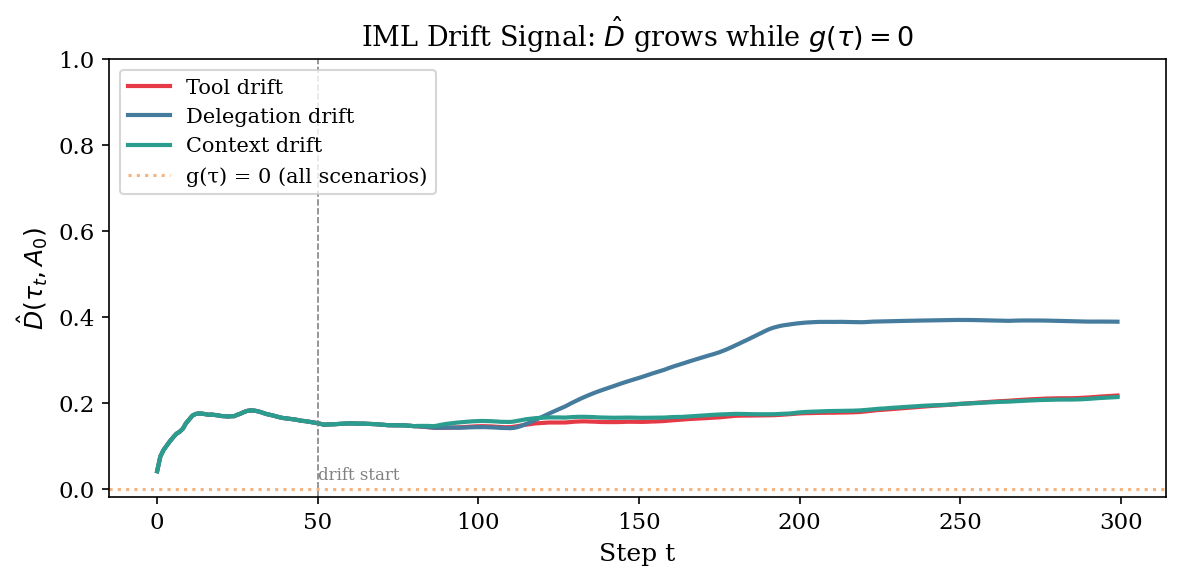}
  \caption{$\Dhat(\tau_t, \Azero)$ for all three drift scenarios (300 steps).
           Enforcement signal $g(\tau_t) = 0$ throughout (not shown).
           Drift onset at $t=50$ (dashed line).}
  \label{fig:drift_curves}
\end{figure}

\begin{figure}[t]
  \centering
  \includegraphics[width=\textwidth]{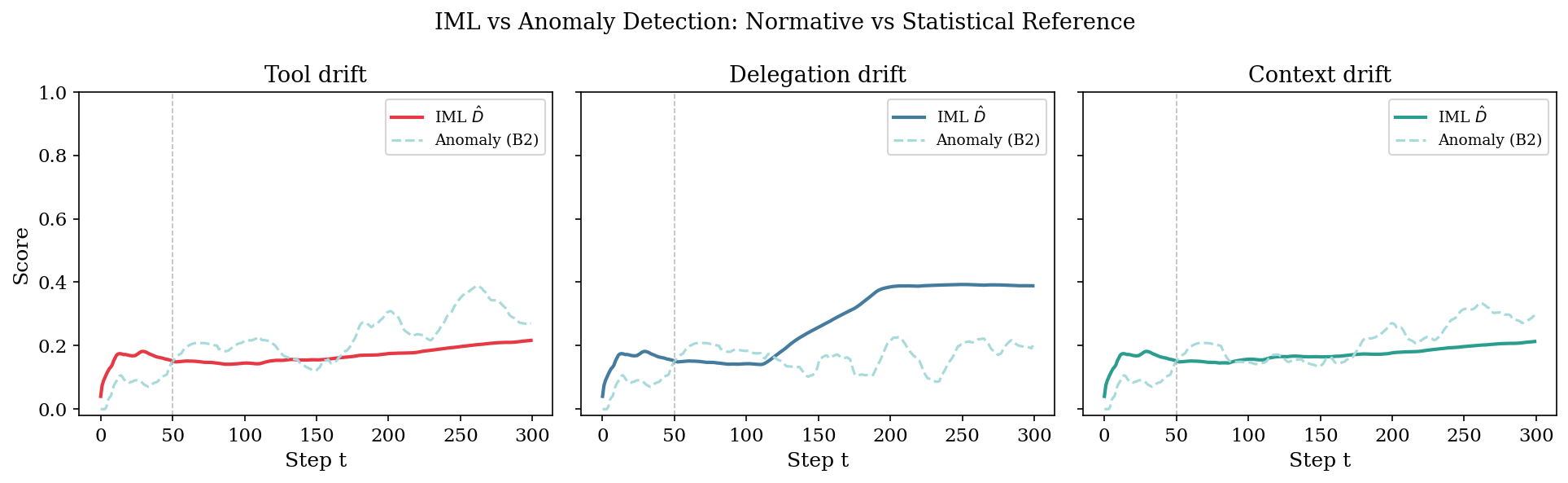}
  \caption{IML vs.\ anomaly detector (B2) across three scenarios.
           \emph{Tool/context drift}: B2 peaks then declines due to reference
           contamination; IML's reference $P_{E_0}$ is frozen.
           \emph{Delegation drift}: B2 is lineage-blind ($D_l=0$ always);
           IML separation $+0.188$.}
  \label{fig:iml_vs_anomaly}
\end{figure}

\begin{figure}[t]
  \centering
  \includegraphics[width=0.65\textwidth]{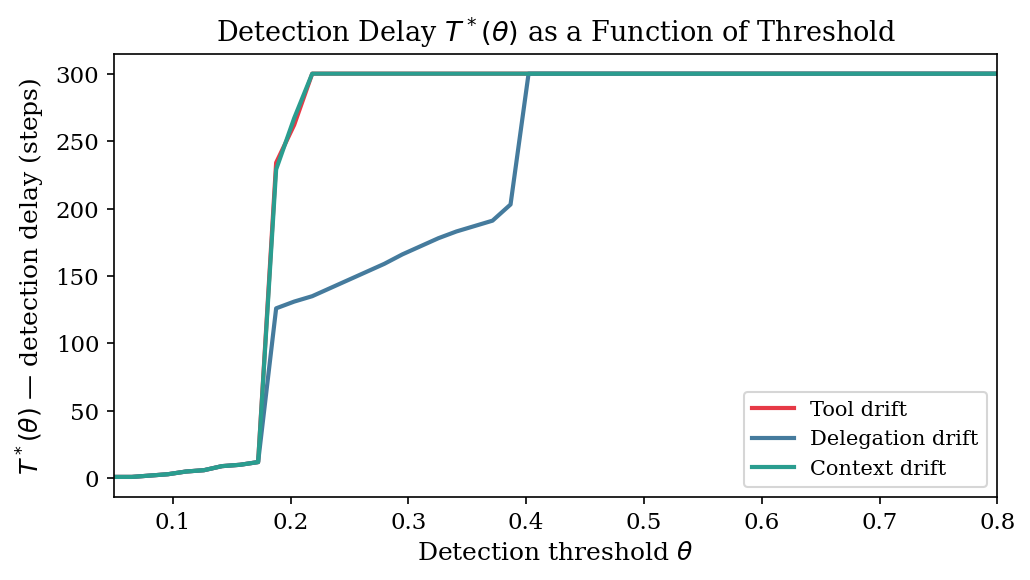}
  \caption{Detection delay $T^*(\theta)$ as a function of threshold~$\theta$.
           Delegation drift (blue) is detected earliest at all thresholds
           because $D_l$ saturates rapidly; tool and context drift (red, teal)
           converge to similar curves.}
  \label{fig:detection_delay}
\end{figure}

\subsection{Long-Horizon Validation (1000 Steps)}
\label{sec:longhorizon}

A potential concern with 300-step experiments is that enforcement might
eventually trigger as drift accumulates.  To address this, we re-run all
three scenarios for 1000 steps (seed~42) with identical parameters.

\begin{table}[h]
\centering
\caption{Long-horizon results at 1000 steps. Enforcement = 0 throughout
         all 3000 steps combined; $\Dhat$ continues growing beyond 300 steps.}
\label{tab:longhorizon}
\small
\begin{tabular}{@{}lcccc@{}}
\toprule
Scenario & Steps & Enf. & $\Dhat_{\text{final}}$ & $T^*_{0.20}$ \\
\midrule
Tool drift       & 1000 & \textbf{0} & 0.229 & 794 \\
Delegation drift & 1000 & \textbf{0} & 0.393 & 336 \\
Context drift    & 1000 & \textbf{0} & 0.227 & 802 \\
\midrule
\textit{Total} & \textit{3000} & \textbf{0} & --- & --- \\
\bottomrule
\end{tabular}
\end{table}

\begin{figure}[t]
  \centering
  \includegraphics[width=\textwidth]{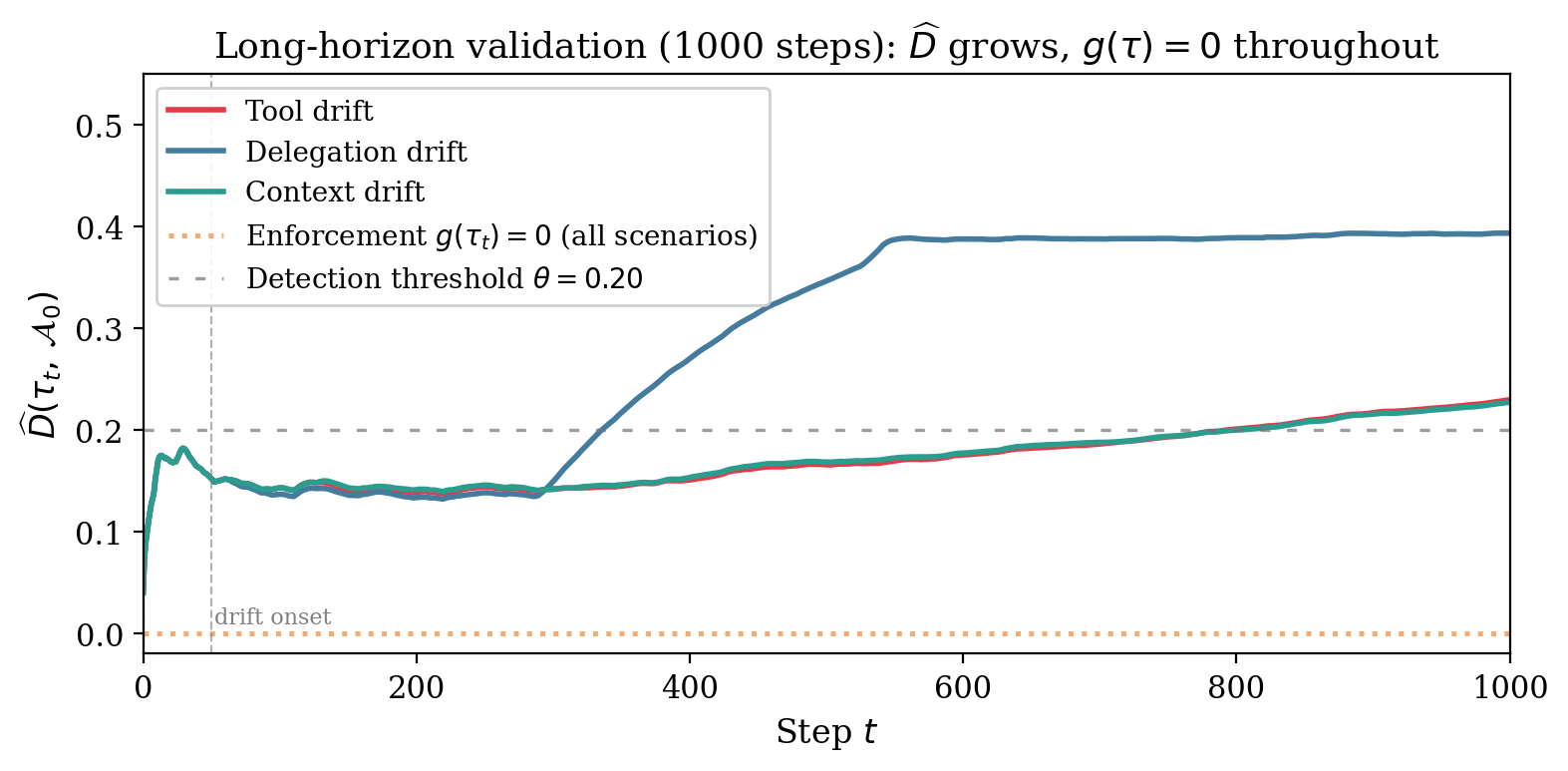}
  \caption{Long-horizon drift (1000 steps, seed 42).  $\Dhat(\tau_t,\Azero)$
           grows monotonically in all three scenarios while the enforcement
           signal $g(\tau_t)=0$ throughout (dotted orange).  Drift onset at
           $t=50$ (dashed vertical); detection threshold $\theta=0.20$
           (dashed horizontal).  The gap between $\Dhat$ and enforcement
           signal persists and widens over the full 1000-step horizon,
           directly instantiating Theorem~\ref{thm:hiddendrift}.}
  \label{fig:longhorizon}
\end{figure}

Enforcement triggers zero violations across all 3000 steps, confirming that
the compliance-invariance gap identified in Theorem~\ref{thm:nonid} is not
a transient artifact of short experiments but a persistent structural property.
$\Dhat$ grows through step 1000 in all scenarios, with delegation drift
reaching $0.393$ and tool/context drift reaching $\approx 0.228$.  The
monotone growth directly instantiates Theorem~\ref{thm:hiddendrift}: a sequence
$(\tau_t)_{t=1}^{1000}$ with $g(\tau_t)=0$ and $D(\tau_t,\Azero)$ strictly
increasing.

\subsection{LangGraph Agent Experiment}
\label{sec:langgraph}

To demonstrate IML in a real agent-framework execution context, we
implement a \textbf{LangGraph}~\citep{langchain} \texttt{StateGraph} with
two nodes (\texttt{decide\_tool} $\to$ \texttt{execute\_tool}) and a
deterministic \texttt{MockLLM} that selects tools without external API
calls, providing full reproducibility across seeds.  Using a deterministic
sampler isolates the IML signal from LLM stochasticity; connecting IML
to live LLM-backed agents is deferred to future work.
After a 50-step burn-in under safe-heavy behavior, we induce gradual
compliant drift over 200 steps by shifting tool selection probabilities
linearly from $\{$safe: 75\%, boundary: 20\%, risky: 5\%$\}$ toward
$\{$safe: 15\%, boundary: 75\%, risky: 10\%$\}$, with delegation depth
increasing from 1 to 3.  No forbidden tools are used; $g(\tau_t)=0$
throughout all 250 steps.

\begin{figure}[t]
  \centering
  \includegraphics[width=\textwidth]{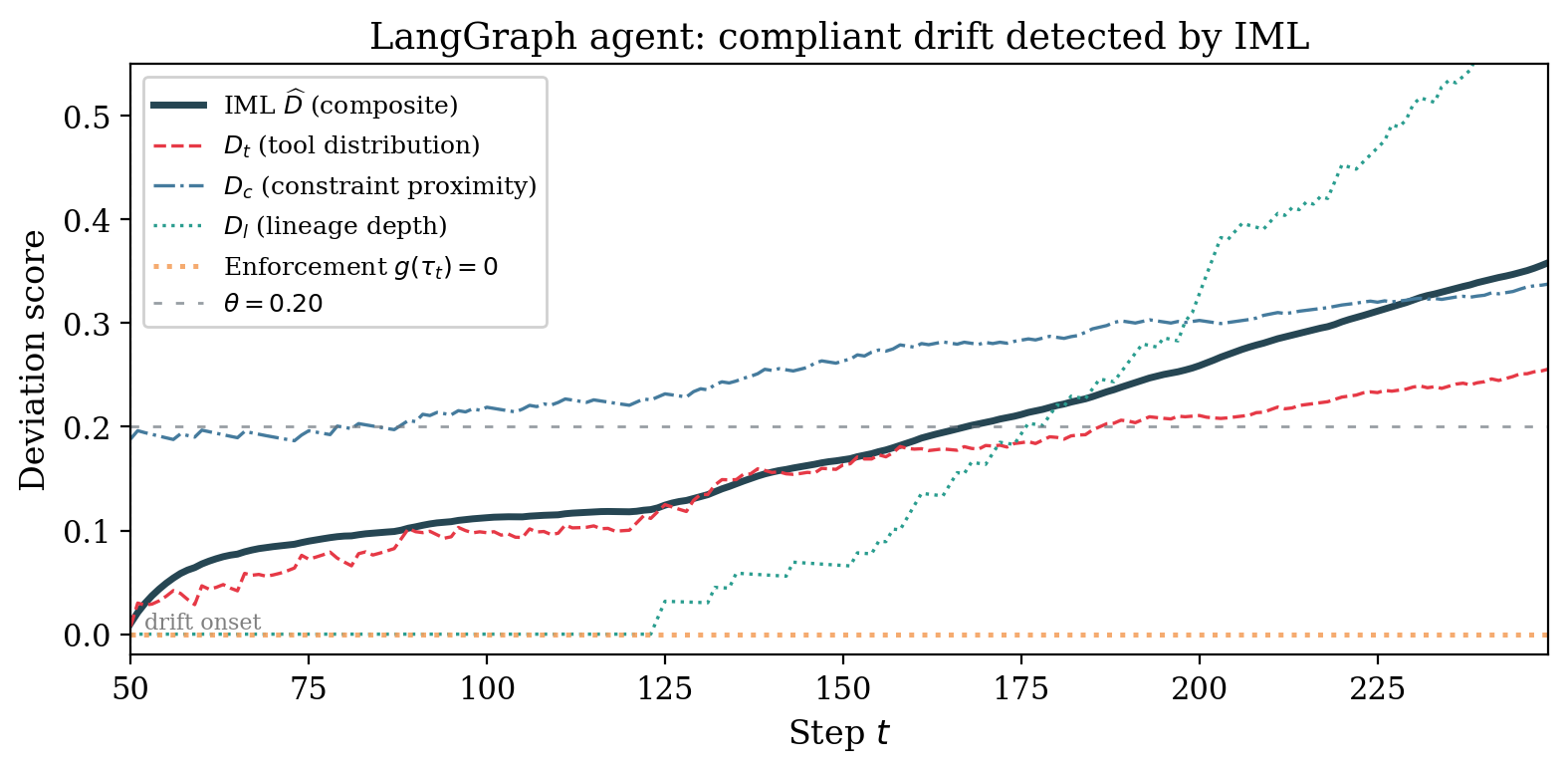}
  \caption{LangGraph agent under gradual compliant drift (50 burn-in +
           200 drift steps, seed~42).  IML composite $\Dhat$ (dark) reaches
           $0.358$ while enforcement $g(\tau_t)=0$ throughout (dotted orange).
           Component breakdown: $D_t$ (tool distribution shift, red dashed),
           $D_c$ (constraint proximity, blue dash-dot), $D_l$ (lineage depth,
           teal dotted).  Detection at $T^*_{0.20}=168$ steps.}
  \label{fig:langgraph}
\end{figure}

We run 6 independent seeds (42, 1--5).
Table~\ref{tab:langgraph} reports per-seed results; enforcement $= 0$
in all 6 runs.

\begin{table}[h]
\centering
\caption{LangGraph multi-seed results (250 steps each).
         Enforcement $= 0$ throughout all runs.
         $T^*_{0.20}$: first step with $\Dhat \geq 0.20$ (step index
         from $t=0$; drift onset at $t=50$, so steps into drift $= T^* - 50$).}
\label{tab:langgraph}
\small
\begin{tabular}{@{}lcc@{}}
\toprule
Seed & $\Dhat_{\text{final}}$ & $T^*_{0.20}$ (steps into drift) \\
\midrule
42   & 0.358 & 168 \quad (118) \\
1    & 0.338 & 174 \quad (124) \\
2    & 0.356 & 180 \quad (130) \\
3    & 0.316 & 190 \quad (140) \\
4    & 0.353 & 177 \quad (127) \\
5    & 0.340 & 168 \quad (118) \\
\midrule
Mean $\pm$ std & $0.343 \pm 0.016$ & $176 \pm 8$ \quad ($126 \pm 8$) \\
\bottomrule
\end{tabular}
\end{table}

Across all seeds, $\Dhat$ rises from $\approx 0.01$ at drift onset to
$0.343 \pm 0.016$ at $t=249$, with $T^*_{0.20} = 176 \pm 8$ (detection
within $126 \pm 8$ steps of drift onset), while enforcement registers zero
triggers in every run.  Component analysis confirms the drift is driven
primarily by $D_c$ (boundary tool accumulation) and $D_t$ (distributional
shift), with $D_l$ activating after $t \approx 150$ as delegation depth
increases.  These results confirm T2 and T3 on a real agent-framework
execution stack with consistent behavior across seeds.

\section{Related Work}
\label{sec:related}

\paragraph{Relationship to the governance series.}
This paper is part of a series on formal agent governance.
\citet{fernandez2026a} (Paper~0) proves that only atomic decision
systems---where evaluation and execution share the same state
transition---can guarantee admissibility at runtime; split systems
(RBAC, OPA, policy engines) cannot close this gap regardless of policy
sophistication.
\citet{fernandez2026acp} (Paper~1) instantiates this principle as the
Agent Control Protocol: an atomic admission-control mechanism with
execution-trace state.
IML (this paper, Paper~2) addresses the behavioral layer above the atomic
boundary: even within a correctly functioning atomic enforcement system,
behavioral drift accumulates in the region $g^{-1}(0)$ and is invisible
to the enforcement signal.
\citet{fairgov26} (Paper~3) addresses fairness and resource allocation in
shared admission systems, establishing that atomic correctness and drift
observability together are insufficient for equitable access to the
decision boundary.
\citet{fernandez2026comp} (Paper~4) proves that the four layers
(Papers~0--3) form a \emph{minimal} compositional architecture: under
finite observability, no three-layer subset simultaneously guarantees
atomicity, drift detection, actor-level fairness, and Sybil resistance.
IML's $\Dhat$ is used directly as the Layer~2 monitoring signal in that
composition, and the Non-Identifiability Theorem (T2 of this paper) is
the formal basis for Case~2 of that paper's irreducibility proof.
\citet{fernandez2026ram} (Paper~5) closes the series by operationalizing
the impossibility result of this paper: since full observability is
unachievable (IML), execution authority must be reconstructed at runtime
from whatever is observable---the Reconstructive Authority Model (RAM)
provides the constructive response to IML's impossibility.

\paragraph{BAR-Monitor.}
\citet{fernandez2026acp} introduces the Boundary Activation Rate
($\mathrm{BAR}_N$) monitor to detect \emph{deviation collapse}---a regime
in which enforcement is syntactically active but no boundary is exercised
($\mathrm{BAR} \approx 0$).  IML's $\Dhat$ addresses a complementary failure
mode: behavioral drift that occurs \emph{while} enforcement remains active
($g(\tau) = 0$ for all observed $\tau$, yet $\mathrm{BAR} > 0$).
$\mathrm{BAR}$ monitors enforcement health; $\Dhat$ monitors behavioral
distance to $\Azero$.  The two mechanisms are designed to be deployed
together.

\paragraph{Runtime verification.}
Runtime Verification (RV) \cite{leucker2009brief,falcone2012runtime} checks
execution traces against formal specifications using local monitors.
Our work provides a formal account of what RV monitors cannot detect: any
behavioral property in $\Azero \setminus \sigg$.  IML complements RV rather
than replacing it.

\paragraph{Diagnosability in discrete-event systems.}
Sampath et al.\ \cite{sampath1995diagnosability} defined diagnosability as the
ability to detect faults from partial observations in finite time.
Theorem~\ref{thm:nonid} establishes that $\Azero$-membership is
\emph{not diagnosable} from $g$ alone; Theorem~\ref{thm:iml} establishes
that it \emph{becomes} diagnosable once the IML component~$\Azero^{\mathrm{emp}}$
is added.

\paragraph{Partial observability.}
POMDPs \cite{kaelbling1998planning} model decision-making under partial
state observation.  Our enforcement observer faces an analogous problem:
it observes $g(\tau) \in \{0,1\}$ rather than full $\Azero$-membership.
T3 is a constructive analogue of POMDP belief tracking: IML maintains a
``belief'' about deviation using the generative model of $\Azero$.

\paragraph{Information flow and non-interference.}
Goguen and Meseguer \cite{goguen1982security} asked whether a low-security
observer can infer high-security behavior.  Our setting is dual: we ask
whether an enforcement observer can infer $\Azero$-membership from~$g$.
Theorem~\ref{thm:nonid} gives a negative answer that parallels non-deducibility.

\paragraph{Anomaly detection.}
Statistical anomaly detectors \cite{chandola2009anomaly} compare current to
historical distributions.  Our experiments confirm their fundamental limitation:
without a frozen normative reference, the historical distribution drifts with
the agent, eliminating sensitivity to gradual shift.

\paragraph{AI agent safety.}
Recent work on agent oversight \cite{greenblatt2024ai,perez2022red} focuses
on detecting misaligned behavior through behavioral testing.  Our results
provide formal grounding for why enforcement-only oversight is provably
insufficient, and what information an estimator must access to close the gap.

\section{Discussion and Limitations}
\label{sec:discussion}

\paragraph{Scope of T2.}
Theorem~\ref{thm:nonid} holds for the entire class of $g$ satisfying
Assumption~\ref{asm:local}.  A richer observation map that queries~$\Azero$
directly at each step would sidestep the impossibility---but maintaining and
querying $\Azero$ at runtime is precisely what IML does.  T2 thus motivates
IML rather than dismisses it.

\paragraph{Approximation quality of $\Azero^{\mathrm{emp}}$.}
Our empirical snapshot is a three-parameter summary.  Richer models (e.g., full
language model distributions over action sequences) would improve $D_t$
sensitivity at the cost of runtime overhead.  The theoretical guarantees in
T3 require only Lipschitz continuity and bounded drift, not a specific snapshot
representation.

\paragraph{Real agent traces.}
Section~\ref{sec:real_traces} reports validation on a live n8n webhook
pipeline.  Results confirm T2 and T3 on authentic delegation chains:
$\Dhat$ grew from $0.095$ to $0.403$ while enforcement remained zero, and
$T^*_{0.30} = 9$ steps into the drift phase.  The $D_l$ component
($0.333$ final) confirms that depth-level deviation is detectable even with
modest delegation depth variance, supporting our expectation that lineage
sensitivity will be especially pronounced in real systems.

\paragraph{Adversarial drift.}
An agent that actively minimizes $\Dhat$ while drifting could in principle
evade detection.  This connects to reward hacking under partial observability
and requires strengthened assumptions on drift structure.

\paragraph{Regime violations of T3 assumptions.}
Assumption~\ref{asm:regularity}(A1) requires bounded drift rate
$\|P_t - P_{t-1}\|_1 \leq \delta$.  In practice, LLM-backed agents can
exhibit abrupt concept shifts---e.g., a tool mis-routing triggered by a
prompt injection---where the behavioral distribution changes discontinuously
in a single step.  In such cases, the detection delay bound of
Corollary~\ref{cor:delay} no longer applies: $T^*$ may be zero (immediate
detection) or, if the shift causes $D_t^*$ to jump above~$\theta$ without
the EMA catching up, detection may be delayed by $O(1/\alpha_{\mathrm{EMA}})$
steps.  The IML signal remains useful even under A1 violations---a step
discontinuity in $\Dhat$ is itself informative---but quantitative T3 guarantees
should be interpreted as applying to the gradual drift regime.
Handling hard concept shifts with formal guarantees is left for future work.

\paragraph{Future work.}
Future work will focus on three directions.  First, we will refine the
deviation metrics by exploring alternative instantiations for each component,
including Earth Mover's Distance, process-mining features, and learned weights
from human feedback signals.  Second, we will evaluate IML in more realistic
multi-agent environments (LangGraph, CrewAI, AutoGen) with induced drift and
compare against baselines from concept-drift and anomaly detection.  Third,
we will develop an open-source IML middleware that plugs into existing agent
stacks with native integration for admission control and runtime enforcement.

Fairness and resource allocation in shared admission systems are
intentionally left as a separate line of work: they concern observability
coverage and exploration bias rather than long-horizon deviation monitoring,
and deserve their own formal treatment.

\section{Conclusion}
\label{sec:conclusion}

We have shown that the gap between enforcement compliance and behavioral
invariance is not incidental but \emph{structurally inevitable}: under the
Local Observability Assumption satisfied by all practical enforcement systems,
$\Azero \notin \sigg$.  No rule-based system---however carefully
engineered---can recover admission-time invariants from its own signals alone.

This result motivates a three-layer governance architecture.
\emph{Admission} (ACP~\citep{fernandez2026acp}) defines and records
$\Azero$ at authorization time.
\emph{Invariant monitoring} (IML, this paper) measures how far the
agent's behavior has drifted from that contract, operating in the region
where enforcement is structurally blind.
\emph{Enforcement} acts on hard constraint violations.
Each layer is necessary; none is sufficient alone.

The Invariant Measurement Layer addresses this impossibility by anchoring
deviation estimation to the frozen admission snapshot~$\Azero^{\mathrm{emp}}$.
The resulting estimator is consistent, has quantified detection delay, and
empirically distinguishes drifted traces that enforcement is blind to---across
controlled simulations up to 1000 steps, a live n8n webhook pipeline, and a
real LangGraph agent framework---with enforcement triggers = 0 throughout in
every setting.

The broader architectural implication is clear: agent governance systems that
rely solely on enforcement will systematically miss the class of
admissible-to-inadmissible drift that accumulates gradually within the
permitted action space.  A lightweight admission-time snapshot is a necessary
complement, not an optional enhancement.

This paper is the second in a series on agent governance.  The first,
\citet{fernandez2026acp}, established that stateful admission control is
necessary and proved that enforcement-only governance is insufficient against
behavioral history; IML now establishes that even stateful enforcement is
insufficient against admission-time behavioral contracts, and provides the
missing monitoring layer.  Together, the two mechanisms cover complementary
failure modes: ACP monitors the enforcement boundary; IML monitors behavioral
distance within the compliant region.

\bibliographystyle{abbrvnat}
\bibliography{references}

\end{document}